\def\isarxiv{1} 
\definecolor{mydarkblue}{rgb}{0,0.08,0.45}
 \definecolor{iccvblue}{rgb}{0.21,0.49,0.74}
\theoremstyle{plain}
\newtheorem{theorem}{Theorem}[section]
\newtheorem{lemma}[theorem]{Lemma}
\newtheorem{definition}[theorem]{Definition}
\newtheorem{corollary}[theorem]{Corollary}
\newtheorem{remark}[theorem]{Remark}
\newcommand{\wh}{\widehat}
\newcommand{\wt}{\widetilde}
\newcommand{\R}{\mathbb{R}}
\DeclareMathOperator*{\E}{{\mathbb{E}}}
\DeclareMathOperator*{\var}{\mathrm{Var}}
\DeclareMathOperator{\poly}{poly}
\newcommand*{\RN}[1]{\expandafter\@slowromancap\romannumeral #1@}
\title{
Time and Memory Trade-off of KV-Cache Compression in Tensor Transformer Decoding
}
\author{First Author\\
Institution1\\
Institution1 address\\
{\tt\small firstauthor@i1.org}
\and
Second Author\\
Institution2\\
First line of institution2 address\\
{\tt\small secondauthor@i2.org}
}
\begin{document}

\ifdefined\isarxiv

\date{}

\title{
Time and Memory Trade-off of KV-Cache Compression in Tensor Transformer Decoding
}

\author{
Yifang Chen\thanks{\texttt{
yifangc@uchicago.edu}. The University of Chicago.}
\and
Xiaoyu Li\thanks{\texttt{
xli216@stevens.edu}. Stevens Institute of Technology.}
\and
Yingyu Liang\thanks{\texttt{
yingyul@hku.hk}. The University of Hong Kong. \texttt{
yliang@cs.wisc.edu}. University of Wisconsin-Madison.} 
\and
Zhenmei Shi\thanks{\texttt{
zhmeishi@cs.wisc.edu}. University of Wisconsin-Madison.}
\and
Zhao Song\thanks{\texttt{ magic.linuxkde@gmail.com}. The Simons Institute for the Theory of Computing at UC Berkeley.}
\and
Yu Tian\thanks{\texttt{ kingyutian01@gmail.com}. Independent Researcher.}
}

\else

\fi

\ifdefined\isarxiv
\begin{titlepage}
  \maketitle
  \begin{abstract}
The key-value (KV) cache in the tensor version of transformers presents a significant bottleneck during inference. While previous work analyzes the fundamental space complexity barriers in standard attention mechanisms [Haris and Onak, 2025], our work generalizes the space complexity barriers result to tensor attention version. Our theoretical contributions rely on a reduction from communication complexity and deduce the memory lower bound for tensor-structured attention mechanisms when $d = \Omega(\log n)$. Furthermore, we introduce two types of tensor attention cache and present a trade-off between time and memory for two scenarios.
Overall, our work provides a theoretical foundation for us to understand the time-memory tradeoff of KV-Cache compression in tensor attention decoding and offers more perspectives in developing more memory-efficient tensor attention Transformer architectures.

  \end{abstract}
  \thispagestyle{empty}
\end{titlepage}

{\hypersetup{linkcolor=black}
\tableofcontents
}
\newpage

\else
\maketitle
\begin{abstract}

\end{abstract}

\fi

\section{Introduction}

Transformers \cite{vsp+17} have emerged as the dominant architecture in modern deep learning systems, revolutionizing performance across natural language processing, computer vision, and multimodal tasks~\cite{bmr+20,dbk+21}. Many popular large language models including GPT-o3~\cite{gpto1}, Llama 3.3~\cite{llama3_arxiv,llama3_blog}, and Claude 3.7~\cite{claude3_pdf} have demonstrated unprecedented reasoning abilities. At the core of these architectures is the self-attention mechanism, which dynamically weights different parts of the input sequence, enabling models to capture long-range dependencies and contextual relationships~\cite{vsp+17,bmr+20}.

Autoregressive transformers represent a particularly influential variant that has dominated text generation tasks~\cite{kvpf20,sgs+24}. These models generate sequences one token at a time, with each prediction conditioned on all previously generated tokens~\cite{ydy+19}. 
During inference decoding, transformers utilize a masked self-attention mechanism that prevents the model from attending to future tokens, maintaining the causal structure necessary for proper text generation. While this approach has proven remarkably effective, it introduces unique computational challenges, particularly regarding memory usage and inference speed for long-context generation.

To resolve the significant memory challenges during inference, the concept of key-value (KV) cache is introduced in~\cite{pdc+23}. This KV cache, while essential for efficient inference by avoiding redundant computation, demands memory that scales linearly with sequence length. For large language models (LLMs) processing extensive contexts of tens or hundreds of thousands of tokens, this memory requirement still becomes prohibitively expensive.

Since the current KV cache grows linearly with sequence length, a range of work has been devoted to addressing this challenge~\cite{cdw+21,lwd+23,ldl+23,zsz+23,jlzm24,zdh24,xtc+24,zhmk24,llp+25} by compressing the KV cache and aiming for sublinear space complexity. However, these methods often rely on certain assumptions regarding the structure of key-value embeddings~\cite{ho25}. For example, the sliding window attention mechanism~\cite{dyl+24} only preserves key-value embeddings within a narrow range of preceding tokens and ignores embeddings lying outside this small window. Minicache~\cite{llp+25} proposes a method to merge two caches into a single one.

A recent study~\cite{ho25} has illuminated fundamental space complexity barriers of $\Omega(nd)$ for the KV cache compression of standard attention-based transformer. \cite{sht23,as24_iclr,lls+24_rope_tensor_tc0,lssz24_tat,zly+25} introduce a novel tensor attention-based transformer that can efficiently capture high-order correlations. Thus, it is natural to ask a question:

\begin{center}
   {\it What is the space complexity and running time for the KV cache in tensor-attention transformer decoding? }
\end{center}

In this work, we provide a theoretical foundation for understanding the compression, expressivity trade-off in tensor attention mechanisms, providing new perspectives into developing memory-efficient Transformer architectures. In particular, we introduce two kinds of KV cache for the tensor-attention transformer. We show that two-cache-matrix (Definition~\ref{def:attn_two_cache}) require less computation time, while four-cache-matrix (Definition~\ref{def:attn_four_cache}) require less memory.  

Our contribution can be described as follows:
\begin{itemize}
    \item We introduce two formulations of KV Cache for tensor Attention-based Autoregressive Transformers (see Definition~\ref{def:attn_four_cache} and~\ref{def:attn_two_cache}).
    \item We derive the running time and memory trade-off between two cache scenarios under $d = \Omega(\log n)$.
    The four-cache-matrix case requires a space lower bound of $\Omega(nd)$ and the two-cache-matrix case requires $\Omega(n^2 d)$ (Theorems~\ref{thm:tensor_lower_bound_four_matrices} and~\ref{thm:tensor_lower_bound_two_matrices}), while the two-cache-matrix case computation is faster by $\Omega(n^2 d)$ compared to the four-cache-matrix case (Theorem~\ref{thm:time_difference}).  
    \item We prove the optimality of \textsc{SubGen4Cache} and \textsc{SubGen2Cache} in the low dimensional regime $d = o(\log n)$ (see Theorem~\ref{thm:subgen_optimal_four} and~\ref{thm:subgen_optimal_two}).
\end{itemize}

Our derived theoretical bounds illuminate the fundamental space limitations of tensor attention-based Transformers and reveal an inherent trade-off between time and memory.

\paragraph{Roadmap.}
In Section~\ref{sec:related_work}, we review relevant literature related to our study.
In Section~\ref{sec:preli}, we provide the necessary background.
In Section~\ref{sec:main_res}, we present our main theoretical lower bound results for memory usage when $d = \Omega(\log n)$.
In Section~\ref{sec:tech_res}, we introduce key algorithms and mathematical foundations enabling efficient memory usage.
In Section~\ref{sec:low_dim}, we explore the space complexity of tensor attention in the low-dimensional regime where $d = o(\log n)$.
In Section~\ref{sec:conclusion}, we conclude the paper. 

\section{Related Work}\label{sec:related_work}

\paragraph{Tensor Computation for High-order Representation.}
Tensors offer a superior framework over matrices for modeling multi-dimensional relationships inherent in data. The process of deriving low-rank factorizations or approximations of tensors is vital across a multitude of computer science domains, including natural language processing~\cite{cyym14, lzbj14, lzm+15, bnr+15}, computer vision~\cite{adtl09, ply10, lfc+16, clz17}, computer graphics~\cite{vt04, wws+05, v09}, security~\cite{acky05, acy06, kb06}, and data mining~\cite{kabo10, rst10, ks08, m11}. Moreover, tensors play an essential role in numerous machine learning applications~\cite{pblj15, jo14, hk13, alb13, zsj+17, ysst19, ssl+22} as well as in several other diverse fields~\cite{rtp16, cmd+15, zczj16, ycs16, rnss18}.

\paragraph{Transformers and Autoregressive Models in Computer Vision.}
Recent advancements in computer vision have been driven by the integration of Transformers and autoregressive models, significantly enhancing both discriminative and generative tasks. Initially designed for natural language processing, Transformers have been adapted for vision, with Vision Transformer (ViT) pioneering the treatment of images as patch sequences~\cite{dbk+21}. Subsequent works, such as Swin Transformer, leverage hierarchical designs and shifted-window self-attention to improve efficiency~\cite{llc+21}, while DETR reframes object detection as a set prediction task~\cite{cms+20}. In multimodal learning, CLIP employs natural language supervision to learn transferable visual representations, enabling zero-shot classification~\cite{rkh+21}. Similarly, DeiT and BEiT enhance training efficiency through knowledge distillation and self-supervised pre-training, respectively~\cite{tcd+21,bdpw22}. On the generative front, Image Transformer and DALL-E extend Transformer architectures to autoregressive image synthesis~\cite{pvu+18,rbhp21}, while Visual Autoregressive Modeling (VAR) redefines autoregressive learning as next-scale prediction, achieving superior performance over traditional autoregressive and diffusion models~\cite{tjy+24}. Building on this, FlowAR and ARFlow integrate flow-based techniques with attention mechanisms to advance high-resolution generation~\cite{ryh+24,hzy+25}. These developments highlight the complementary strengths of Transformers and autoregressive models in visual understanding and generation.

\paragraph{Fast Attention Computation.}
The attention mechanism is widely noted for having a computational cost that scales quadratically with the context length, a limitation that grows increasingly significant for contemporary LLMs~\cite{gpt4_arxiv,gpto1,llama3_arxiv,llama3_blog,claude3_pdf}.
Nevertheless, polynomial kernel approximation methods~\cite{aa22} offer a viable solution by enabling low-rank approximations of the attention matrix. This approach can substantially accelerate computations, allowing a single attention layer to train and infer at speeds approaching linear time~\cite{as23, as25_nips}. Our work further generalizes this efficiency across multiple transformer layers for both training and inference. Moreover, these kernel-based techniques naturally extend to more sophisticated attention variants, such as tensor attention, maintaining near-linear complexity in both training and evaluation~\cite{as24_iclr, lssz24}. Beyond these methods, alternative theoretical approaches also exist, with the conv-basis technique in~\cite{lls+24_conv} presenting another strategy for expediting attention computations.

\section{Preliminary}\label{sec:preli}

In this section, we provide the necessary preliminaries.
In Section~\ref{sec:preli:notation}, we present our notation.
In Section~\ref{sec:preli:columnwise_kronecker}, we introduce the column-wise Kronecker product and the softmax function.
In Section~\ref{sec:preli:jl_random_projection}, we present the Johnson-Lindenstrauss (JL) random projection.
In Section~\ref{sec:preli:attn_four}, we formalize the attention mechanism and outline the KV cache compression problem.
In Section~\ref{sec:preli:index_problem}, we define the \textsc{Index} problem.
In Section~\ref{sec:preli:index_lower_bound}, we establish the communication complexity lower bound for the \textsc{Index} problem.
In Section~\ref{sec:preli:clusterability}, we introduce the concept of clusterability for key embeddings.

\subsection{Notations}\label{sec:preli:notation}
For $n \in \mathbb{Z}^+$, where $\mathbb{Z}^+$ denotes positive integers, we use $[n]$ to denote set $\{1,2,\cdots, n\}$. We use $\E[\cdot]$ to denote the expectation.
We use $\var[\cdot]$ to represent the variance. We use $\Pr[\cdot]$ to denote the probability. We use $\mathbbm{1}\{C\}$ to represent an indicator variable that takes the value $1$ when condition $C$ holds and $0$ otherwise. $B^\top$ is defined as $(B^\top)_{i,j}:=B_{j,i}$. 
We use $x_{i,j}$ to denote the $j$-th coordinate of $x_i \in \R^n$.
We use $\|x\|_p$ to denote the $\ell_p$ norm of a vector $x \in \R^n$, i.e., $\|x\|_1 := \sum_{i=1}^n |x_i|$, $\|x\|_2 := (\sum_{i=1}^n x_i^2)^{1/2}$.
Suppose there are two vectors $c,d$ of the same length, we denote the entry-wise multiplication using the notation $c \odot d$; that is, the $i$-th entry in that vector is $c_i d_i$. 
We use ${\bf 0}_d$ to represent the vector of all zeros in dimension d.
We use $|M|$ to denote the length of a sequence $M$.
For two vectors $x \in \R^n$ and $y \in \R^n$, we use $\langle x, y \rangle$ to denote the inner product between $x,y$, i.e., $\langle x, y \rangle = \sum_{i=1}^n x_i y_i$.

\subsection{Useful Definitions}\label{sec:preli:columnwise_kronecker}

We state a standard notation which is called column-wise kronecker product. Such notation has been used in previous \cite{as24_iclr} (see Section~2).
 
\begin{definition}[$\oslash$ Column-wise Kronecker Product]\label{def:tensor_oslash}
Given matrices $K_1 \in \R^{n_1 \times d}, K_2 \in \R^{n_2 \times d}$, we define matrix $K := K_1 \oslash K_2 \in \R^{n_1 n_2 \times d}$ as follows
\begin{align*}
    K_{i_1 + (i_2-1)n_1 , j } := (K_1)_{i_1,j} \cdot (K_2)_{i_2,j}, \\~~~\forall i_1 \in [n_1], i_2 \in [n_2], j \in [d].
\end{align*}
\end{definition}

This operation constructs $K$ by combining elements of $K_1$ and $K_2$ in a column-wise manner, yielding a matrix with $n_1 n_2$ rows while preserving the column dimension $d$. It provides a compact and computationally efficient framework for subsequent tensor-based operations.

Next, we introduce the Softmax function, which is a fundamental unit in transformer architecture~\cite{vsp+17}.
\begin{definition}[Softmax]\label{def:softmax}
     The function $\sigma_{i^2}: \R^{i^2} \to \R^{i^2}$ is the softmax function, defined for any vector $z \in \R^{i^2}$ as:
    \begin{align*}
        \sigma_{i^2}(z)_j = \frac{\exp(z_j)}{\sum_{k=1}^{i^2} \exp(z_k)} ~~ \forall j \in [i^2]
    \end{align*}
\end{definition}

\subsection{Johnson-Linderstrauss (JL) Random Projections}\label{sec:preli:jl_random_projection}

In this section, we introduce a fundamental tool from random projection theory, the Johnson-Lindenstrauss (JL) Random Projection, which enables dimensionality reduction while approximately preserving inner products. We begin by defining the JL-transform \cite{jl84} and its properties following the framework established in~\cite{w14_book}.

\begin{definition}[JL-tranform \cite{jl84}, see Definition 3 in~\cite{w14_book} as an example]\label{def:jlt}
For a random matrix $S \in \R^{k \times n}$ forms a Johnson-Lindenstrauss transform with parameters $\epsilon, \delta, f$ or $\mathrm{JLT}(\epsilon, \delta, f)$ for short, if with probability at least $1 - \delta$, then for any $f$-element subset $V \subset \R^n$, for all $v,v' \in V$ it holds that
\begin{align*}
    | \langle Sv, Sv' \rangle - \langle v,v'\rangle | \leq \epsilon \|v \|_2 \| v'\|_2
\end{align*}
\end{definition}

The next lemma establishes when a random Gaussian matrix satisfies the JL-transform property.

\begin{lemma}[Theorem 4 in~\cite{w14_book}]\label{lem:jlt_gaussian}
If the following conditions hold:
\begin{itemize}
    \item Let $\epsilon, \delta \in (0,0.1)$.
    \item Let $S = \frac{1}{\sqrt{k}}R \in \R^{k\times n}$ where the entries $R_{i,j}$ of $R$ are independent standard normal random variables.
\end{itemize}
Then if $k = \Omega(\epsilon^{-2} \log (f/\delta))$, we have:
\begin{itemize}
    \item $S$ is a $\mathrm{JLT(\epsilon, \delta, f)}$.
\end{itemize}
\end{lemma}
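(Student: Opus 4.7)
The plan is to reduce the inner-product preservation statement to a single-vector norm-preservation statement via polarization, apply a chi-squared tail bound to each auxiliary vector, and then union-bound across the finite set $V$.

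First I would establish the following norm-preservation bound: for any fixed $v \in \R^n$,
\begin{align*}
\Pr\bigl[\,\bigl|\|Sv\|_2^2 - \|v\|_2^2\bigr| > \epsilon \|v\|_2^2\,\bigr] \leq 2\exp(-c k \epsilon^2),
\end{align*}
for an absolute constant $c > 0$. By rotational invariance of the standard Gaussian, after normalizing we may take $v = e_1$, in which case $Rv$ has i.i.d.\ $\N(0,1)$ coordinates. Hence $k\|Sv\|_2^2 = \sum_{i=1}^k g_i^2$ is $\chi^2_k$-distributed, and the stated tail bound follows from the standard Laurent--Massart inequality.

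Next I would convert norm preservation into inner-product preservation by polarization. For unit vectors $u, w \in \R^n$,
\begin{align*}
\langle Su, Sw\rangle - \langle u, w\rangle = \tfrac{1}{4}\bigl(\|S(u+w)\|_2^2 - \|u+w\|_2^2\bigr) - \tfrac{1}{4}\bigl(\|S(u-w)\|_2^2 - \|u-w\|_2^2\bigr).
\end{align*}
If the norm-preservation bound (with parameter $\epsilon$) holds on both $u+w$ and $u-w$, then combining with the parallelogram identity $\|u+w\|_2^2 + \|u-w\|_2^2 = 4$ yields $|\langle Su, Sw\rangle - \langle u, w\rangle| \leq \epsilon$. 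Rescaling by $\|v\|_2\|v'\|_2$ (applied to $u = v/\|v\|_2$, $w = v'/\|v'\|_2$) gives the bound claimed in Definition~\ref{def:jlt}. Since there are at most $2f^2$ auxiliary vectors of the form $v/\|v\|_2 \pm v'/\|v'\|_2$, choosing $k = C\epsilon^{-2}\log(f/\delta)$ for a sufficiently large constant $C$ drives each single-vector failure probability below $\delta/(2f^2)$, and a union bound yields overall failure probability at most $\delta$.

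The main technical obstacle is the chi-squared concentration in the first step; all the constants in the final sample complexity $k = \Omega(\epsilon^{-2}\log(f/\delta))$ trace back to the exponent there, so one must be careful with the moment generating function analysis. The remaining ingredients, polarization and union bound, are essentially routine bookkeeping.
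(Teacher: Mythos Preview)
Your argument is correct and is the standard textbook proof: chi-squared concentration for the squared norm of a Gaussian projection, polarization to pass from norm preservation to inner-product preservation, and a union bound over the $O(f^2)$ auxiliary vectors. The paper, however, does not supply any proof of this lemma at all; it simply quotes the result as Theorem~4 of~\cite{w14_book} and moves on, so there is nothing to compare against beyond noting that your sketch is exactly the argument one would find in that reference.
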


By simple algebra, it is obvious that Lemma~\ref{lem:jlt_gaussian} will imply the following Lemma. For more examples of the following Lemma, we refer the readers to see Theorem 13 in \cite{w14_book}, Lemma A.2 in \cite{sxz22}, and Lemma 22 in \cite{ho25}.
\begin{lemma}[Johnson-Lindenstrauss (JL) Random Projections, an immediate application of Lemma~\ref{lem:jlt_gaussian} , \cite{w14_book}]\label{lem:inner-product-jl-for-all}
If the following conditions hold:
\begin{itemize}
    \item Let $\epsilon \in (0,0.1)$.
    \item Let $V = \{v_1,...,v_n\} \subset \R^n$ such that $\|v_i\|_2 \leq 1$ for all $i \in [n]$.
    \item Let $S = \frac{1}{\sqrt{d}}R \in \R^{d\times n}$ where the entries $R_{i,j}$ of $R$ are independent standard normal random variables.
\end{itemize}
Then if $d = \Omega(\epsilon^{-2}\log(n))$, we have:
\begin{itemize}
    \item With probability at least $1-1/\poly(n)$, for all $v_i, v_j \in V$, it holds that:
        \begin{align*}
        |v_i^\top v_j - (Sv_i)^\top (Sv_j)| \leq \epsilon.
        \end{align*}
\end{itemize}
\end{lemma}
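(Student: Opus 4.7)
The plan is to invoke Lemma~\ref{lem:jlt_gaussian} essentially as a black box, with the parameters tuned so that its conclusion translates directly into the statement we want. First I would set $f := n$ (since the target set $V$ has exactly $n$ elements) and pick the failure probability $\delta := 1/n^c$ for some sufficiently large constant $c$, so that $1-\delta = 1 - 1/\poly(n)$ as required. With these choices, the sample-size requirement of Lemma~\ref{lem:jlt_gaussian} becomes
\begin{align*}
k \;=\; \Omega\!\left(\epsilon^{-2} \log(f/\delta)\right) \;=\; \Omega\!\left(\epsilon^{-2} \log(n \cdot n^c)\right) \;=\; \Omega\!\left(\epsilon^{-2} \log n\right),
\end{align*}
which is exactly the hypothesis $d = \Omega(\epsilon^{-2} \log n)$ placed on $k = d$ in the current lemma.

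Next I would apply the JLT conclusion directly. Since $S = \frac{1}{\sqrt{d}} R$ with $R$ having i.i.d.\ standard normal entries matches the matrix construction in Lemma~\ref{lem:jlt_gaussian}, that lemma tells us $S$ is a $\mathrm{JLT}(\epsilon,\delta,n)$. By Definition~\ref{def:jlt}, this means with probability at least $1-\delta = 1 - 1/\poly(n)$, for every pair $v_i, v_j \in V$,
\begin{align*}
|\langle S v_i, S v_j \rangle - \langle v_i, v_j \rangle | \;\leq\; \epsilon \, \|v_i\|_2 \, \|v_j\|_2.
\end{align*}
Finally, using the norm assumption $\|v_i\|_2 \leq 1$ for all $i \in [n]$, the right-hand side is bounded by $\epsilon$, yielding the stated inequality $|v_i^\top v_j - (S v_i)^\top (S v_j)| \leq \epsilon$ uniformly over all pairs.

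\paragraph{Main obstacle.}
There is no substantive obstacle here: the lemma is flagged in the paper as an \emph{immediate} consequence of Lemma~\ref{lem:jlt_gaussian}, and the argument is purely bookkeeping. The only thing one must be a little careful about is matching the polynomial probability bound to the logarithmic sample-size bound, i.e.\ noting that $\log(n/\delta) = \Theta(\log n)$ whenever $\delta = 1/\poly(n)$, so that absorbing the constant $c$ does not weaken the dimension requirement. Once that identification is made and the norm bound $\|v_i\|_2 \leq 1$ is used to drop the $\|v_i\|_2 \|v_j\|_2$ factor, the lemma follows.
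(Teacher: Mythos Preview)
Your proposal is correct and matches the paper's own proof essentially verbatim: the paper sets $k=d$, $f=n$, $\delta = 1/n$, invokes Lemma~\ref{lem:jlt_gaussian} to conclude $S$ is a $\mathrm{JLT}(\epsilon,\delta,n)$, applies Definition~\ref{def:jlt}, and then uses $\|v_i\|_2 \le 1$ to replace $\epsilon\|v_i\|_2\|v_j\|_2$ by $\epsilon$. Your only (cosmetic) difference is allowing $\delta = 1/n^c$ for general $c$, which the paper absorbs into the $1/\poly(n)$ statement anyway.
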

For the completeness of the paper, we still provide a proof.
\begin{proof}
    Setting parameters $k = d$, $f = n$ and $\delta = \frac{1}{n}$ in Lemma~\ref{lem:jlt_gaussian}, we have if $d = \Omega(\epsilon^{-2}\log(n))$, then with probability $1-1/\poly(n)$, for all $v_i, v_j \in V$, we have
    \begin{align*}
        |v_i^\top v_j - (Sv_i)^\top (Sv_j)| \leq &~ \epsilon \|v_i\|_2 \|v_j\|_2 \\
        \leq &~ \epsilon
    \end{align*}
    where the first step follows from Definition~\ref{def:jlt}, and the second step follows from that $\|v_i\|_2 \leq 1$ for all $i \in [n]$.
\end{proof}

\subsection{Attention Mechanism with KV Cache}\label{sec:preli:attn_four}

We restate a definition from previous work \cite{as24_iclr}.

\begin{definition}[Tensor Attention, a variation of \cite{as24_iclr}]\label{def:tensor_attention}
    Let $d, i \in \R$ where $d$ is the embedding dimension and $i$ is the current sequence length. Let $\odot: \R^d \times \R^d \to \R^d$ denotes the Hadamard product and $\sigma_{i^2}$ is the Softmax function from Definition~\ref{def:softmax}.  Given a query vector $q_i \in \R^d$ and four matrices $K_{1,i}, K_{2,i}, V_{1,i}, V_{2,i} \in \R^{i \times d}$
    \begin{align*}
    K_{1,i} = \begin{bmatrix}
        k_{1,1}^\top\\
        k_{1,2}^\top\\
        \vdots\\
        k_{1,i}^\top
    \end{bmatrix},~
    K_{2,i} = \begin{bmatrix}
        k_{2,1}^\top\\
        k_{2,2}^\top\\
        \vdots\\
        k_{2,i}^\top
    \end{bmatrix},~V_{1,i} = \begin{bmatrix}
        v_{1,1}^\top\\
        v_{1,2}^\top\\
        \vdots\\
        v_{1,i}^\top
    \end{bmatrix}
    ,~V_{2,i} = \begin{bmatrix}
        v_{2,1}^\top\\
        v_{2,2}^\top\\
        \vdots\\
        v_{2,i}^\top
    \end{bmatrix}
    \end{align*}
    are four $i\times d$ matrices containing the historical key and value vectors.
    
    Then we are able to define the tensor attention function as:
    \begin{align*}
        \mathsf{Attn} &(q_i,K_{1,i},K_{2,i},V_{1,i},V_{2,i}) 
        := \underbrace{(V_{1,i} \oslash V_{2,i})^\top}_{d \times i^2} \cdot \sigma_{i^2}(\underbrace{(K_{1,i} \oslash K_{2,i})}_{i^2 \times d}\cdot \underbrace{q_i}_{d \times 1})  \in \R^{d},
    \end{align*}
     and $\oslash: \R^{i \times d} \times \R^{i \times d} \to \R^{i^2 \times d}$ denotes the column-wise Kronecker product from Definition~\ref{def:tensor_oslash}, which for any row indices $a, b \in [i]$ with corresponding flattened index $\ell = (a-1)i + b \in [i^2]$, gives:
    \begin{align*}
        (K_{1,i} \oslash K_{2,i})_{\ell} &= k_{1,a} \odot k_{2,b}\\
        (V_{1,i} \oslash V_{2,i})_{\ell} &= v_{1,a} \odot v_{2,b}
    \end{align*}
\end{definition}

Then we are able to present the definition of key-value cache in tensor attention.
\begin{definition}\label{def:attn_four_cache}
For a list of indices $1, \cdots, i$, we say matrices $K_{1,i},K_{2,i},V_{1,i}$ and $V_{2,i}$ (be defined as Definition~\ref{def:tensor_attention}) are collectively called the  key-value (KV) cache. Specifically, $K_{1,i}, K_{2,i}$ store key representations and $V_{1,i}, V_{2,i}$ store value representations, facilitating efficient tensorized attention computations.

\end{definition}

Next, we define the attention mechanism that consolidates the four KV cache matrices into two, leveraging the column-wise Kronecker product.

\begin{definition}[Tensor Attention with KV Cache, Two Cache Matrices Case]\label{def:attn_two_cache}
    Let $d, i \in \R$ where $d$ is the embedding dimension and $i$ is the current sequence length. Given a query vector $q_i \in \R^d$ and two matrices $\wt{K}_i, \wt{V}_i \in \R^{i^2 \times d}$, we define the tensor attention function as:
    \begin{align*}
        \mathsf{Attn}(q_i,\wt{K}_i,\wt{V}_i) := \wt{V}_i^\top \cdot \sigma_{i^2}(\wt{K}_i \cdot q_i) \in \R^d,
    \end{align*}
    and the two cache matrices $\wt{K}_i$ and $\wt{V}_i$ are derived from the column-wise Kronecker product (see Definition~\ref{def:tensor_oslash}) of the original key and value matrices:
    \begin{align*}
        \wt{K}_i &:= K_{1,i} \oslash K_{2,i} \in \R^{i^2 \times d}\\
        \wt{V}_i &:= V_{1,i} \oslash V_{2,i} \in \R^{i^2 \times d}
    \end{align*}
    where $K_{1,i}, K_{2,i}, V_{1,i}, V_{2,i} \in \R^{i \times d}$ are the four original cache matrices from Definition~\ref{def:attn_four_cache}, and $\sigma_{i^2}$ is the Softmax function from Definition~\ref{def:softmax}.
\end{definition}

This definition converts the original four KV cache matrices into two matrices, $\wt{K}_i$ and $\wt{V}_i$, each of size $i^2 \times d$, by precomputing the column-wise Kronecker products. This approach reflects the trade-off between preprocessing and space complexity, as discussed in Remark~\ref{rem:trade_off}.

\begin{remark}[Memory-Speed Trade-off Between Two and Four Cache Matrices]\label{rem:trade_off}
   The two formulations of tensor attention KV Caches present a trade-off between memory efficiency and computational speed during inference:
   
   \begin{itemize}
       \item {\bf Four Cache Matrices Case:} This approach stores $K_{1,i}, K_{2,i}, V_{1,i}, V_{2,i} \in \R^{i \times d}$, requiring $O(id)$ memory. However, during inference, it must compute the column-wise Kronecker products $K_{1,i} \oslash K_{2,i}$ and $V_{1,i} \oslash V_{2,i}$ on-the-fly, resulting in $O(i^2d)$ computational complexity for each attention operation.
       
       \item {\bf Two Cache Matrices Case:} This approach directly stores the precomputed Kronecker products $\wt{K}_i = K_{1,i} \oslash K_{2,i}$ and $\wt{V}_i = V_{1,i} \oslash V_{2,i}$, requiring $O(i^2d)$ memory. While this uses significantly more memory (quadratic in $i$ rather than linear), it enables faster inference as the expensive Kronecker product computation is eliminated.
   \end{itemize}
\end{remark}

\begin{figure}[!ht]
    \centering
    \includegraphics[width= 1.0\linewidth]{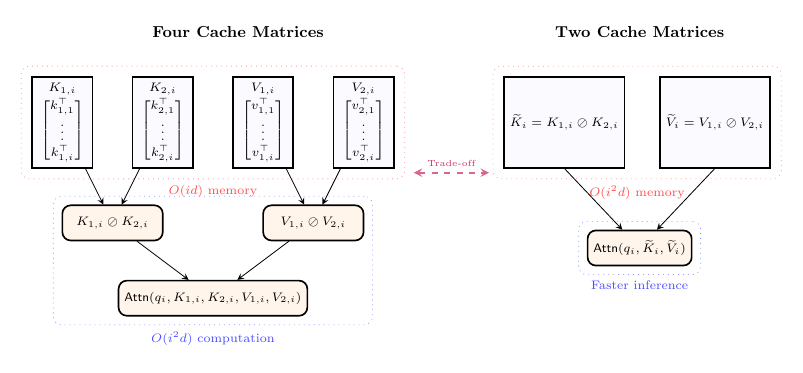}
    \caption{ Comparison of memory and computational requirements between four-cache and two-cache matrix formulations in tensor attention. The four-cache approach $(K_{1,i},K_{2,i},V_{1,i},V_{2,i})$ uses linear memory $O(id)$ but incurs $O(i^2d)$ computational cost for Kronecker products during inference. The two-cache approach $(\wt{K}_i, \wt{V}_i)$ pre-computes these products, requiring $O(i^2d)$ memory but enabling faster inference.}
    \label{fig:kv_caches}
\end{figure}

This illustrates a fundamental space-time trade-off in attention mechanisms: the four cache matrices formulation is memory-efficient but computationally expensive during inference, while the two cache matrices formulation uses more memory but achieves faster inference times (see Figure~\ref{fig:kv_caches}). Our lower bound proof in Theorem~\ref{thm:tensor_lower_bound_two_matrices} demonstrates that the memory requirement of $\Omega(n^2d)$ for the two cache matrices case is unavoidable, highlighting the inherent cost of optimizing for inference speed in this setting.

\subsection{The Index Communication Problem}\label{sec:preli:index_problem}

We use the same communication proof framework as \cite{ho25} to prove our lower bound. We state a well-known definition in the area of communication complexity.

\begin{definition}[The \textsc{Index} Problem \cite{kus97,ab09}]
In the \textsc{Index} problem, Alice holds a bit string $x \in \{0,1\}^n$, and Bob holds an index $i \in [n]$. Alice sends a single message (one-way) $M \in \{0,1\}^*$ to Bob, whose goal is to output $x_i$ with probability at least $2/3$.
\end{definition}

\subsection{Lower Bound on the Index Problem}\label{sec:preli:index_lower_bound}

We state a well-known tool in the area of communication complexity.

\begin{lemma}[Complexity of the \textsc{Index} Problem \cite{kus97,ab09}]\label{lem:index_lb}
The complexity of randomized one-way communication of \textsc{Index} is $\Omega(n)$.
\end{lemma}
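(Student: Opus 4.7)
The plan is to prove this classical lower bound via an information-theoretic argument on a hard input distribution, following the standard approach. First I would invoke Yao's minimax principle to reduce bounding randomized worst-case complexity to bounding deterministic expected communication on a suitable input distribution. The natural distribution is the product distribution where Alice's string $x$ is uniform over $\{0,1\}^n$ and Bob's index $i$ is independent and uniform over $[n]$; any randomized protocol that succeeds with probability $\geq 2/3$ on every input must also succeed with probability $\geq 2/3$ in expectation over this distribution, so we may fix the protocol's randomness and work with a deterministic protocol producing a message $M = M(x)$.

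Next, I would reduce the global success guarantee to a per-coordinate recoverability statement. Writing $\hat{x}_i = \hat{x}_i(M, i)$ for Bob's output, the overall success $\Pr[\hat{x}_i = x_i] \geq 2/3$ under the joint uniform distribution over $(x,i)$ implies, since $i$ is uniform over $[n]$, that on average $\Pr[\hat{x}_i = x_i] \geq 2/3$ for $i \in [n]$. For each such coordinate, Fano's inequality gives $H(x_i \mid M) \leq H_2(1/3)$, where $H_2$ is the binary entropy, so
\begin{align*}
    I(M; x_i) = H(x_i) - H(x_i \mid M) \geq 1 - H_2(1/3) =: c > 0.
\end{align*}

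The third step uses independence of the bits $x_1,\dots,x_n$ to aggregate these per-coordinate information bounds. By the chain rule, $I(M; x_1,\dots,x_n) = \sum_{i=1}^n I(M; x_i \mid x_{<i})$, and because the $x_j$ are mutually independent, one has $I(M; x_i \mid x_{<i}) \geq I(M; x_i)$ (the conditioning does not reduce the entropy of $x_i$ and can only increase conditional entropy given $M$ beyond $H(x_i \mid M, x_{<i})$). Combining with the trivial bounds $I(M;x) \leq H(M) \leq \E[|M|]$ (interpreting $|M|$ as the message length in bits), I obtain
\begin{align*}
    \E[|M|] \geq \sum_{i=1}^n I(M; x_i) \geq cn = \Omega(n),
\end{align*}
which yields the claimed $\Omega(n)$ lower bound on randomized one-way communication.

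The main technical care will be in the averaging step and in justifying the inequality $I(M; x_i \mid x_{<i}) \geq I(M; x_i)$ under independence, both of which are standard but easy to misstate. Everything else is a direct invocation of Fano's inequality, the chain rule for mutual information, and Yao's principle, so this should be a short, self-contained argument and the lemma can also simply be cited from \cite{kus97,ab09} for brevity.
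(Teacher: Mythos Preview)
The paper does not prove this lemma; it is stated as a well-known communication-complexity tool and simply cited from \cite{kus97,ab09} without argument. Your sketch is the standard information-theoretic proof underlying those references (Yao's principle, Fano, chain rule with independence), and is correct. The one spot to be precise about is exactly the one you flag: the success guarantee $\geq 2/3$ is only an average over $i$, so you cannot apply Fano coordinate-by-coordinate with error $1/3$; instead sum the Fano bounds $H(x_i\mid M)\le H_2(1-p_i)$ and use concavity of $H_2$ to get $\sum_i I(M;x_i)\ge n(1-H_2(1/3))$. With that fix the argument goes through, and as you note, a bare citation would also match the paper's treatment.
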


\subsection{Clusterability of Key Embeddings}\label{sec:preli:clusterability}
We state a definition about Clusterability from recent work \cite{zhmk24,ho25}.
 
\begin{definition}[Clusterability Definition 1 on Page 5 in \cite{zhmk24}, see Definition 10 from~\cite{ho25} on Page 8 as an example]\label{def:clusterability}
For a positive integer $m$ and real-valued $\delta > 0$, a dataset of points $x_1,\hdots,x_n \in \R^d$ is said to be $(m,\delta)$-clusterable if there exists a partition of the dataset into $m$ clusters $C_1,\hdots,C_m$ such that for all $i \in [m]$ and $x,y \in C_i$ it holds that $\|x-y\|_2 \leq \delta$.
\end{definition}

\section{Space Complexity Barriers for \texorpdfstring{$d = \Omega(\log n)$}{}}\label{sec:main_res}

In this section, we present our lower bound results for memory usage when computing or approximating the tensor attention function when $d = \Omega(\log n)$.
In Section~\ref{sec:main_res:mem_lb_four}, we establish the memory lower bound for four cache matrices.
In Section~\ref{sec:main_res:mem_lb_two}, we establish the memory lower bound for two cache matrices.
In Section~\ref{sec:main_res:time_difference}, we present the time complexity difference between the two cache matrices case and the four cache matrices case.
In Section~\ref{sec:main_res:approx_alg_four}, we extend our memory lower bound to approximation algorithms for four cache matrices.
In Section~\ref{sec:main_res:approx_alg_two}, we extend our memory lower bound to approximation algorithms for two cache matrices.

\subsection{Memory Lower Bound for Four Cache Matrices}\label{sec:main_res:mem_lb_four}

Now we present our main theorem for the memory lower bound for computing the tensor attention function with four cache matrices, which is a variation of Theorem 6 in~\cite{ho25}. The difference is that we extend the proof from regular attention in~\cite{ho25} to tensor attention with four cache matrices from Definition~\ref{def:attn_four_cache} using similar proof structure in~\cite{ho25}.

\begin{theorem}[Memory Lower Bound for Four Cache Matrices, Informal Version of Theorem~\ref{thm:app_tensor_lower_bound_four_matrices}]\label{thm:tensor_lower_bound_four_matrices}
    If the following conditions hold:
    \begin{itemize}
        \item Let $C_u > 1$ be a universal constant.
        \item Let $d \geq C_u\log n$.
        \item Let the output of any algorithm produces $o_n \in \R^d$ where $o_n = \mathsf{Attn}(q_n,K_{1,n},K_{2,n},V_{1,n},V_{2,n})$.
    \end{itemize}
    Then we have:
    \begin{itemize}
        \item Any algorithm producing $o_n$ with probability at least $0.99$ use $\Omega(nd)$ bits of memory.
    \end{itemize}
\end{theorem}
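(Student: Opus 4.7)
The plan is to establish the $\Omega(nd)$ lower bound via a reduction from the \textsc{Index} communication problem (Section~\ref{sec:preli:index_problem}), following the template of~\cite{ho25}. Assume toward contradiction that some algorithm $\mathcal{A}$ computes $o_n = \mathsf{Attn}(q_n,K_{1,n},K_{2,n},V_{1,n},V_{2,n})$ correctly with probability $0.99$ using only $S$ bits of state. The goal is to turn $\mathcal{A}$ into a one-way protocol for \textsc{Index} on inputs of length $\Theta(nd)$, at which point Lemma~\ref{lem:index_lb} forces $S = \Omega(nd)$.

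The first step is to collapse the tensor structure onto standard attention through a particular instantiation of the second Kronecker branch. Take $k_{2,b} = \mathbf{1}_d$ and $v_{2,b} = \mathbf{1}_d$ for every $b \in [n]$. Then row $(a-1)n+b$ of $K_{1,n} \oslash K_{2,n}$ equals $k_{1,a}$ and row $(a-1)n+b$ of $V_{1,n} \oslash V_{2,n}$ equals $v_{1,a}$; because the logit $\langle k_{1,a} \odot k_{2,b}, q_n \rangle$ depends only on $a$, the softmax mass distributes uniformly over $b$ within each $a$-block, and a direct telescoping gives
\begin{align*}
    \mathsf{Attn}(q_n, K_{1,n}, K_{2,n}, V_{1,n}, V_{2,n}) = \sum_{a=1}^{n} v_{1,a} \cdot \frac{\exp(\langle k_{1,a}, q_n \rangle)}{\sum_{a'=1}^{n} \exp(\langle k_{1,a'}, q_n \rangle)},
\end{align*}
which is exactly standard attention on $(K_{1,n}, V_{1,n})$. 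So the tensor attention algorithm $\mathcal{A}$ is at least as hard to memory-compress as standard attention on this key/value pair.

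Next, I would replay the Haris--Onak \textsc{Index} encoding inside the collapsed structure. Alice holds $x \in \{-1, +1\}^{nd}$ and Bob holds an index $\ell \in [nd]$ parameterized as a row $a^\star \in [n]$ together with a coordinate $j^\star \in [d]$. Alice draws i.i.d.\ Gaussian unit vectors $k_{1,1}, \dots, k_{1,n} \in \R^d$; applying Lemma~\ref{lem:inner-product-jl-for-all} with $\epsilon = 1/2$, the hypothesis $d \geq C_u \log n$ guarantees $|\langle k_{1,a}, k_{1,a'}\rangle| \leq 1/2$ for all $a \neq a'$ with probability at least $1 - 1/\poly(n)$. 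Alice encodes her bits by $(V_{1,n})_{a,j} = x_{(a-1)d+j}$, uses the trivial $K_{2,n}, V_{2,n}$ above, streams the inputs through $\mathcal{A}$, and transmits the $S$-bit memory state to Bob. Bob completes the run with query $q_n = T \cdot k_{1,a^\star}$ for a temperature $T = \Theta(\log n)$ tuned so that the softmax mass at row $a^\star$ exceeds $1 - 1/(10d)$; the output $o_n$ then lies within $\ell_\infty$-distance $< 1/2$ of $v_{1,a^\star}$, so $\mathrm{sign}((o_n)_{j^\star}) = x_\ell$ holds with probability at least $2/3$. This yields a one-way \textsc{Index} protocol that uses $S$ bits, and Lemma~\ref{lem:index_lb} forces $S = \Omega(nd)$.

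The main obstacle I anticipate is not any single hard step but a careful composition of slack: one must verify that the $0.99$ success probability of $\mathcal{A}$, the $1 - 1/\poly(n)$ JL near-orthogonality event, and the $1/(10d)$ softmax leakage compose to leave a $2/3$ overall success probability for \textsc{Index}, and that the temperature $T = \Theta(\log n)$ together with the $\{-1,+1\}$ scale in $V_{1,n}$ keep the sign recovery robust under the resulting additive error. The remaining ingredients -- Lemma~\ref{lem:inner-product-jl-for-all} for the near-orthogonal keys, the standard temperature trick for softmax concentration, and Lemma~\ref{lem:index_lb} for the communication lower bound -- are off-the-shelf and follow the standard attention template of~\cite{ho25}.
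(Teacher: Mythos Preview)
Your proposal is correct and follows essentially the same route as the paper: both set $k_{2,b}=v_{2,b}=\mathbf{1}_d$ to collapse the tensor attention to ordinary attention on $(K_{1,n},V_{1,n})$, use Lemma~\ref{lem:inner-product-jl-for-all} to produce near-orthogonal keys, scale the query by $\Theta(\log n)$ to spike the softmax at the target row, and invoke Lemma~\ref{lem:index_lb} on an \textsc{Index} instance of size $\Theta(nd)$. The only cosmetic differences are that the paper encodes Alice's bits as $\{0,1\}$ (recovering via a threshold $\delta<\Delta$ case split) rather than your $\{\pm 1\}$ sign-recovery, and the paper has Bob append an $(n{+}1)$-th quintuple with zero key/value vectors (so the stream consists of full quintuples throughout) rather than having Bob supply only the final query; you should make explicit that the JL keys are generated from shared public randomness so Bob can form $q_n=T\cdot k_{1,a^\star}$.
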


\subsection{Memory Lower Bound for Two Cache Matrices}\label{sec:main_res:mem_lb_two}

Now we present our main theorem for the memory lower bound for computing the tensor attention function with two cache matrices, which is a variation of Theorem 6 in~\cite{ho25}. The difference is that we extend the proof from regular attention in~\cite{ho25} to tensor attention with two cache matrices from Definition~\ref{def:attn_two_cache} using similar proof structure in~\cite{ho25}.

\begin{theorem}[Memory Lower Bound for Two Cache Matrices, Informal Version of Theorem~\ref{thm:app_tensor_lower_bound_two_matrices}]\label{thm:tensor_lower_bound_two_matrices}
    If the following conditions hold:
    \begin{itemize}
        \item Let $C_u > 1$ be a universal constant.
        \item Let $d \geq C_u\log n$.
        \item Let the output of any algorithm produces $o_n \in \R^d$ where $o_n = \mathsf{Attn}(q_n,\wt{K}_n,\wt{V}_n)$, where $\wt{K}_n = K_{1,n} \oslash K_{2,n}$ and $\wt{V}_n = V_{1,n} \oslash V_{2,n}$.
    \end{itemize}
    Then we have:
    \begin{itemize}
        \item Any algorithm producing $o_n$ with probability at least $0.99$ must use at least $\Omega(n^2d)$ bits of memory.
    \end{itemize}
\end{theorem}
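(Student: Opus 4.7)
The plan is to reduce from the \textsc{Index} problem on an instance of size $N = \Theta(n^2 d)$, mirroring the Haris--Onak template used for Theorem~\ref{thm:tensor_lower_bound_four_matrices} but exploiting the fact that the two-cache formulation treats $\wt K_n, \wt V_n \in \R^{n^2 \times d}$ as generic matrices. Because the attention function $\wt V_n^\top \sigma_{n^2}(\wt K_n q_n)$ does not presuppose any Kronecker factorization in its computation, the effective cache carries $n^2$ independently addressable key-value pairs, raising the extractable information from $\Omega(nd)$ (four-cache case) to $\Omega(n^2 d)$.

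First I would describe Alice's encoding. Partition her bit string $x \in \{0,1\}^N$ into $n^2$ blocks $x^{(1)},\ldots,x^{(n^2)}$, each of size $\Theta(d)$. For each $\ell \in [n^2]$, let $\wt k_\ell = S e_\ell \in \R^d$ be the JL sketch of the standard basis vector $e_\ell \in \R^{n^2}$ under $S \in \R^{d \times n^2}$ with scaled Gaussian entries. Since $d = \Omega(\log n) = \Omega(\log n^2)$, Lemma~\ref{lem:inner-product-jl-for-all} provides (with probability $1-1/\poly(n)$) the uniform near-orthogonality $|\langle \wt k_\ell, \wt k_{\ell'} \rangle - \mathbbm{1}\{\ell = \ell'\}| \leq \epsilon$ across all $\ell, \ell' \in [n^2]$, for an arbitrarily small constant $\epsilon$. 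Set $\wt v_\ell \in \{-1/\sqrt d,+1/\sqrt d\}^d$ so that the signs of its coordinates encode the bits of block $x^{(\ell)}$, stack these into $\wt K_n, \wt V_n \in \R^{n^2 \times d}$, feed them through the algorithm to produce its memory state $M$, and transmit $M$ to Bob.

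Next, Bob's retrieval. Given index $i \in [N]$, Bob determines the target block $\ell^*$ and coordinate $j^*$ such that $x_i$ is the $j^*$-th bit of $x^{(\ell^*)}$. He poses the query $q_n = \tau\, \wt k_{\ell^*}$ with scaling $\tau = \Theta(\log n)$. By the near-orthogonality above, the logits satisfy $\langle \wt k_{\ell^*}, q_n \rangle \approx \tau$ while $\langle \wt k_\ell, q_n \rangle \leq \tau \epsilon$ for $\ell \neq \ell^*$; choosing $\tau$ so that $\tau(1 - \epsilon) \gg 2 \log n$ forces the softmax $\sigma_{n^2}(\wt K_n q_n)$ to place weight $1 - o(1)$ on index $\ell^*$. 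Therefore $o_n = \wt V_n^\top \sigma_{n^2}(\wt K_n q_n) \approx \wt v_{\ell^*}$, and reading $\mathrm{sign}((o_n)_{j^*})$ recovers $x_i$ with probability at least $2/3$.

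By Lemma~\ref{lem:index_lb}, the transmitted state $M$ must satisfy $|M| = \Omega(N) = \Omega(n^2 d)$, which gives the claimed memory lower bound. The hard part will be calibrating the softmax concentration against $n^2$ competing keys rather than $n$: the denominator $\sum_\ell \exp(\tau \langle \wt k_\ell, \wt k_{\ell^*}\rangle)$ now aggregates $n^2$ terms, so the JL accuracy $\epsilon$ and the scaling $\tau$ must be simultaneously tuned so that $n^2 \exp(\tau \epsilon)$ remains negligible compared to $\exp(\tau)$, and so that the residual $o(1)$ softmax leakage cannot flip the sign of the targeted value coordinate. With $d = \Omega(\log n)$, $\tau = \Theta(\log n)$, and $\epsilon$ a sufficiently small constant, these constraints are satisfiable by a routine (but careful) adaptation of the four-cache argument.
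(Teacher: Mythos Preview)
Your proposal is correct and matches the paper's proof essentially step for step: both reduce from \textsc{Index} on a $\Theta(n^2 d)$-bit instance, pack the $n^2$ blocks into the rows of $\wt V_n$, use JL sketches of the $n^2$ standard basis vectors as keys in $\wt K_n$, and have Bob issue a scaled query $q = \Theta(\log n)\cdot \wt k_{\ell^*}$ so the softmax concentrates on row $\ell^*$. The only cosmetic difference is that the paper encodes each value row directly in $\{0,1\}^d$ and recovers $x_{i,j}$ by thresholding the $j$-th output coordinate, whereas you use $\pm 1/\sqrt d$ signs and read the sign of $(o_n)_{j^*}$; the softmax calibration (your ``hard part'') is handled in the paper exactly as you anticipate, with $C = 4\ln n$ playing the role of your $\tau$.
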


\subsection{Time Complexity Difference}\label{sec:main_res:time_difference}

In this section, we present the running complexity statements, where the two-cache-matrix case is running faster than the four-cache-matrix case during Tensor Transformer decoding. 

\begin{theorem}\label{thm:time_difference}
    Let the four-cache-matrix and two-cache-matrix computations be illustrated in Figure~\ref{fig:kv_caches}. We show that the two-cache-matrix case computation is faster by $\Omega(n^2 d)$ compared to the four-cache-matrix case computation.
\end{theorem}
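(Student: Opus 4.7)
The plan is to establish Theorem~\ref{thm:time_difference} by directly counting the arithmetic operations each of the two formulations must execute at decoding step $n$, and then subtracting. Since the two-cache formulation (Definition~\ref{def:attn_two_cache}) stores the Kronecker products $\wt{K}_n = K_{1,n} \oslash K_{2,n}$ and $\wt{V}_n = V_{1,n} \oslash V_{2,n}$ already materialized, while the four-cache formulation (Definitions~\ref{def:attn_four_cache} and~\ref{def:tensor_attention}) must realize these $n^2 \times d$ Kronecker products on-the-fly from the four stored $n \times d$ matrices, the gap should come precisely from the cost of constructing those Kronecker products.

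First I would bound the running time of the two-cache case. The computation $\wt{V}_n^\top \cdot \sigma_{n^2}(\wt{K}_n q_n)$ consists of one matrix-vector product $\wt{K}_n q_n$ with $\wt{K}_n \in \R^{n^2 \times d}$, costing $O(n^2 d)$ multiplications; one softmax over a vector of length $n^2$, costing $O(n^2)$; and one matrix-vector product $\wt{V}_n^\top \cdot (\cdot)$, costing another $O(n^2 d)$. Next I would bound the four-cache case. By Definition~\ref{def:tensor_attention}, the algorithm must produce, for each $\ell = (a-1)n+b$ with $a,b \in [n]$, the row $(K_{1,n} \oslash K_{2,n})_\ell = k_{1,a} \odot k_{2,b} \in \R^d$ before it can multiply by $q_n$ or apply softmax; similarly for $V_{1,n} \oslash V_{2,n}$. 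Each of these $n^2$ rows is a Hadamard product of two $d$-vectors, requiring $d$ multiplications, so assembling the two Kronecker products contributes at least $2 n^2 d$ multiplications beyond the shared cost of the subsequent matrix-vector products. Subtracting the two totals yields an additive surplus of $\Omega(n^2 d)$.

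The main obstacle is ruling out the possibility that the four-cache algorithm could bypass the explicit Kronecker products via some algebraic rearrangement that is cheaper than $n^2 d$. To handle this I would argue as follows: any correct implementation of $\mathsf{Attn}(q_n, K_{1,n}, K_{2,n}, V_{1,n}, V_{2,n})$ must produce the full length-$n^2$ softmax argument $(K_{1,n}\oslash K_{2,n})q_n$, since $\sigma_{n^2}$ is coordinate-nonlinear and the output then recombines with the weights through $(V_{1,n}\oslash V_{2,n})^\top$; and even the most efficient rearrangement, e.g.\ precomputing $\tilde q_{b,j} := (K_{2,n})_{b,j}(q_n)_j$ for each $(b,j)$ (cost $nd$) and then computing $K_{1,n}\tilde q_b$ separately for every $b \in [n]$, still yields $n$ matrix-vector products of cost $nd$ each, i.e.\ $n^2 d$ operations, and the analogous $n^2 d$ lower bound holds for the value side. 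Combining these counts, the four-cache case runs in at least the two-cache case's cost plus $\Omega(n^2 d)$, giving the claimed time gap and matching the qualitative picture depicted in Figure~\ref{fig:kv_caches}.
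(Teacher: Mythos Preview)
Your proposal is correct and follows essentially the same approach as the paper: the gap is attributed to the on-the-fly cost of forming $K_{1,n}\oslash K_{2,n}$ and $V_{1,n}\oslash V_{2,n}$, which is $\Omega(n^2 d)$. The paper's own proof is a one-line assertion of this fact from Definitions~\ref{def:attn_four_cache} and~\ref{def:attn_two_cache}, whereas you additionally argue that no algebraic rearrangement avoids the $n^2 d$ work; this extra care is not present in the paper but does not change the underlying argument.
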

\begin{proof}
    Following from Definition~\ref{def:attn_four_cache} and Definition~\ref{def:attn_two_cache}, this proof simply follows from the computation of $\mathsf{Attn}(q_n, K_{1,n}, K_{2,n}, V_{1,n}, V_{2,n})$ and $\mathsf{Attn}(q_n, \wt{K}_n, \wt{V}_n)$ and the time complexity of computation $\wt{K}_n = K_{1,n} \oslash K_{2,n}$ and $\wt{V}_n = V_{1,n} \oslash V_{2,n}$ is $\Omega(n^2 d)$. 
\end{proof}

\subsection{Lower Bound on Approximation Algorithms for Four Cache Matrices}\label{sec:main_res:approx_alg_four}

Having established memory lower bounds for exact computation of tensor attention, we now proceed to consider the case of approximation computation of the tensor attention.

\begin{corollary}[Approximation Algorithm Lower Bound for Four Cache Matrices, Informal Version of Corollary~\ref{thm:app_tensor_approx_lower_bound_four_matrices}]\label{thm:lower_bound_appx_four}
If the following conditions hold:
\begin{itemize}
    \item Let $Z_n := \mathsf{Attn}(q_n, K_{1,n}, K_{2,n}, V_{1,n}, V_{2,n})$ as in Definition~\ref{def:attn_four_cache}.
    \item Let $d = \Omega(\log n)$.
\end{itemize}
Then we have:
\begin{itemize}
    \item Any algorithm that can, with probability at least $0.99$, produce an output $\mathcal{O} \in \R^d$ that is a $(1\pm\eta)$-approximation of $Z_n$ for $\eta \in (0,1)$ must use at least $\Omega(nd)$ bits of memory.
\end{itemize}
\end{corollary}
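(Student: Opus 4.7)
The plan is to leverage the exact memory lower bound from Theorem~\ref{thm:tensor_lower_bound_four_matrices} via a direct reduction: an approximation algorithm that achieves $(1\pm\eta)$-multiplicative accuracy with success probability $0.99$ will be used as a subroutine to solve the \textsc{Index} problem through essentially the same construction that proves the exact lower bound, with only the gadget parameters rescaled so that the approximation error cannot destroy the bit signal.

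First, I would recall the reduction underlying Theorem~\ref{thm:tensor_lower_bound_four_matrices}: Alice converts her bit string $x \in \{0,1\}^n$ into four KV cache matrices $K_{1,n}, K_{2,n}, V_{1,n}, V_{2,n}$ (using JL-style encoding from Lemma~\ref{lem:inner-product-jl-for-all} to pack $\Theta(d)$ bits per row by virtue of $d = \Omega(\log n)$), Bob constructs a query $q_n$ from his index $i$, and the exact output $Z_n = \mathsf{Attn}(q_n,K_{1,n},K_{2,n},V_{1,n},V_{2,n})$ has a designated coordinate whose value reveals $x_i$. The next step is to rescale the value vectors and boost the softmax temperature via a scaling factor on $q_n$ so that the two possible values of that designated coordinate, call them $\alpha_0$ (when $x_i=0$) and $\alpha_1$ (when $x_i=1$), satisfy $\alpha_1 / \alpha_0 > (1+\eta)/(1-\eta)$, where both are strictly positive and bounded away from zero. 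The softmax's exponential concentration makes this separation achievable for any $\eta \in (0,1)$ by simply multiplying keys and the query by a sufficiently large constant depending on $\eta$, while scaling of the values preserves strictly positive outputs.

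Given the gap, Bob's decoder is immediate: run the approximation algorithm on his inputs and Alice's message to obtain $\mathcal{O} \in \R^d$ satisfying $(1-\eta) Z_n \leq \mathcal{O} \leq (1+\eta) Z_n$ coordinate-wise with probability $\geq 0.99$; compare the designated coordinate to the threshold $\tfrac{1}{2}((1-\eta)\alpha_1 + (1+\eta)\alpha_0)$; output $1$ if above and $0$ otherwise. By the chosen gap, this protocol correctly recovers $x_i$ with probability $\geq 0.99 > 2/3$, so it is a valid \textsc{Index} protocol whose transcript is precisely the memory content of the approximation algorithm. Lemma~\ref{lem:index_lb} then forces $\Omega(n)$ bits of communication in the base construction, and the $\Theta(d)$ bits packed per coordinate (justified exactly as in the proof of Theorem~\ref{thm:tensor_lower_bound_four_matrices} via the JL encoding) yield $\Omega(nd)$ in total.

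The main obstacle will be certifying that the multiplicative-gap amplification is compatible with the JL packing argument used to obtain the $d$ factor. Boosting the softmax temperature risks collapsing the attention onto a single key, which would erase the parallel JL-recovery of many bits from one attention call. The delicate balance is to choose the rescaling so that the softmax still mixes over enough positions to preserve the $\Theta(d)$-bit-per-row payload guaranteed by Lemma~\ref{lem:inner-product-jl-for-all}, while simultaneously separating the two outputs by more than the $(1\pm\eta)$-factor. Once this parameter tuning is handled, as in analogous approximation corollaries in \cite{ho25}, the remaining bookkeeping is essentially identical to the exact lower bound, and the $\Omega(nd)$ conclusion follows.
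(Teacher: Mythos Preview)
Your approach is correct and matches the paper's own treatment, which simply defers to Theorem~8 of~\cite{ho25}: the same \textsc{Index} reduction as in Theorem~\ref{thm:tensor_lower_bound_four_matrices} works, with the scaling constant $C$ increased so that the ratio $\Delta/\delta$ exceeds $(1+\eta)/(1-\eta)$ rather than merely $1$.

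However, the ``main obstacle'' you flag is not real. In the construction of Theorem~\ref{thm:app_tensor_lower_bound_four_matrices}, the JL lemma is used only to make the \emph{key} vectors $k_{1,1},\dots,k_{1,n}$ near-orthonormal; the $d$ bits per row are stored \emph{directly} as the value vectors $v_{1,t}=x_t\in\{0,1\}^d$, and the $\Omega(nd)$ lower bound comes from invoking Lemma~\ref{lem:index_lb} on an \textsc{Index} instance of size $nd$. Collapsing the softmax onto the block of indices with $a=i$ is precisely the goal: it makes the output approximately equal to $x_i\in\{0,1\}^d$, whose $j$-th coordinate Bob reads off. Increasing $C$ only sharpens this concentration and widens the gap between $\Delta$ and $\delta$; there is no tension to balance. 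Concretely, from the proof of Theorem~\ref{thm:app_tensor_lower_bound_four_matrices} one has $\Delta/\delta = e^{C(1-2\epsilon)}/n$, so choosing $C > (1-2\epsilon)^{-1}\ln\bigl(n\cdot\tfrac{1+\eta}{1-\eta}\bigr)$ suffices, and the rest of the argument is unchanged.
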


\subsection{Lower Bound on Approximation Algorithms for Two Cache Matrices}\label{sec:main_res:approx_alg_two}

We now extend our investigation to approximation algorithms for the two cache matrices formulation. This case is particularly important for practical implementations that optimize for inference speed by precomputing the column-wise Kronecker products. The following theorem establishes that the quadratic memory requirement persists even for approximation algorithms:

\begin{corollary}[Approximation Algorithm Lower Bound for Two Cache Matrices, Informal Version of Corollary~\ref{thm:app_tensor_approx_lower_bound_two_matrices}]\label{thm:lower_bound_appx_two}
If the following conditions hold:
\begin{itemize}
    \item Let $Z_n := \mathsf{Attn}(q_n, \wt{K}_n, \wt{V}_n)$ as in Definition~\ref{def:attn_two_cache}.
    \item Let $d = \Omega(\log n)$.
\end{itemize}
Then we have:
\begin{itemize}
    \item Any algorithm that can, with probability at least $9/10$, produce an output $\mathcal{O} \in \R^d$ that is a $(1\pm\eta)$-approximation of $Z_n$ for $\eta \in (0,1)$ must use at least $\Omega(n^2d)$ bits of memory.
\end{itemize}
\end{corollary}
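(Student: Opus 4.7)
The plan is to reduce from the \textsc{Index} problem (Lemma~\ref{lem:index_lb}) on strings of length $N = \Theta(n^2 d)$ and to argue that the hard construction underlying Theorem~\ref{thm:tensor_lower_bound_two_matrices} remains robust when the algorithm only returns a $(1\pm\eta)$-approximation of $Z_n$. Any algorithm meeting the approximation guarantee with probability $9/10$ can then be simulated as a one-way protocol for \textsc{Index}, yielding the $\Omega(n^2 d)$ bit lower bound.

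First, I would recall the hard instance construction from Theorem~\ref{thm:tensor_lower_bound_two_matrices}: Alice partitions her bit string $x \in \{0,1\}^N$ into $n^2$ blocks of size $\Theta(d)$, JL-encodes each block into a unit-norm vector using Lemma~\ref{lem:inner-product-jl-for-all} (which requires exactly the hypothesis $d = \Omega(\log n)$), and places the result in the corresponding row of $\wt{V}_n$ in a manner consistent with the Kronecker factorization $V_{1,n}\oslash V_{2,n}$; the rows of $\wt{K}_n=K_{1,n}\oslash K_{2,n}$ are populated with near-orthogonal vectors obtained from the same JL machinery so that they act as row-identifiers. Given Bob's index $i\in[N]$ and its target row $j^\star\in[n^2]$, Bob constructs a query $q_n$ of sufficiently large norm so that $\sigma_{n^2}(\wt{K}_n q_n)$ puts mass $1-o(1)$ on row $j^\star$; by the JL inner-product guarantee, $Z_n=\wt{V}_n^\top\sigma_{n^2}(\wt{K}_n q_n)$ lies within $o(1)$ of the JL-encoding of the block that contains bit $x_i$, and Bob recovers $x_i$ by projecting $Z_n$ onto a fixed test vector and thresholding.

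The remaining step is to verify that a $(1\pm\eta)$-multiplicative approximation $\mathcal{O}$ of $Z_n$ preserves the decoding. By scaling the encoding, one arranges $\|Z_n\|_2 = \Theta(1)$ and a constant gap $c>0$ between the projected values in the $x_i=0$ and $x_i=1$ cases. The approximation hypothesis then gives $\|\mathcal{O}-Z_n\|_2 \le O(\eta)$, so the linear decoding test still succeeds whenever $\eta$ is bounded below $c$; for general $\eta\in(0,1)$ one amplifies the embedded gap by replicating the target bit a constant number of times within its block before applying JL, absorbing the loss into the constant inside $\Omega(n^2 d)$.

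The main obstacle is balancing three error sources simultaneously: the JL inner-product error across all $n^2$ rows, the softmax tail mass on non-target rows, and the multiplicative error $\eta$ promised by the approximation algorithm. Choosing the query norm so that the softmax concentrates to within $\eta^2$ of the indicator at $j^\star$, setting the JL dimension within the $d=\Omega(\log n)$ regime, and union-bounding over the $n^2$ rows produces a clean separation of the two bit-cases in the decoded statistic. An additional subtlety is that the adversary's encoding must respect the Kronecker-product constraint $\wt{K}_n=K_{1,n}\oslash K_{2,n}$ and $\wt{V}_n=V_{1,n}\oslash V_{2,n}$; this is already handled by the construction inherited from Theorem~\ref{thm:tensor_lower_bound_two_matrices} and carries over without modification. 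Combining these pieces with Lemma~\ref{lem:index_lb} delivers the $\Omega(n^2 d)$ memory lower bound claimed in the corollary.
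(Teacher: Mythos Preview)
Your overall strategy matches the paper's, which simply defers the proof to ``the similar idea as Theorem~8 in~\cite{ho25}'' without further detail: reuse the hard instance from the exact-computation lower bound (Theorem~\ref{thm:app_tensor_lower_bound_two_matrices}) and observe that the gap between the $x_{i,j}=0$ and $x_{i,j}=1$ cases survives a $(1\pm\eta)$ distortion because the ratio $\delta/\Delta$ there can be driven arbitrarily small by increasing the query-scaling constant $C$.

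That said, your recollection of the Theorem~\ref{thm:tensor_lower_bound_two_matrices} construction is inaccurate in two places. First, in the paper's proof the value rows $\wt{v}_t$ are \emph{not} JL-encoded; they are the raw bit rows $\wt{v}_t = x_{t,*}\in\{0,1\}^d$, and JL (Lemma~\ref{lem:inner-product-jl-for-all}) is applied only to the \emph{key} vectors $\wt{k}_t$ to make them near-orthonormal identifiers. Bob decodes by reading the $j$-th coordinate of the attention output directly and thresholding; there is no projection onto a JL test vector. Your JL-on-values and bit-replication machinery is therefore not part of the construction you claim to be recalling, and is unnecessary for the argument. Second, the paper's proof of Theorem~\ref{thm:app_tensor_lower_bound_two_matrices} feeds arbitrary triples $(q_t,\wt{k}_t,\wt{v}_t)$ to the streaming algorithm and never enforces the factorization $\wt{K}_n=K_{1,n}\oslash K_{2,n}$, $\wt{V}_n=V_{1,n}\oslash V_{2,n}$ on the adversary's input. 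Your assertion that this Kronecker constraint ``is already handled by the construction inherited from Theorem~\ref{thm:tensor_lower_bound_two_matrices}'' is thus not supported by the proof you cite; the paper's streaming model for the two-cache case simply does not impose it.
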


\section{Main Algorithm for \textsc{SubGen} Clustering}\label{sec:tech_res}

This section introduces key algorithms and mathematical foundations that enable efficient memory usage in tensor attention mechanisms.
In Section~\ref{sec:tech_res:subgen_four}, we present the \textsc{SubGen4Cache} and \textsc{SubGen2Cache} algorithms for four and two cache matrices.
In Section~\ref{sec:tech_res:covering_number_upper_bound}, we provide an upper bound on the covering number of the unit sphere.
In Section~\ref{sec:tech_res:subgen_clusterability}, we establish the inherent clusterability of the \textsc{SubGen4Cache} and \textsc{SubGen2Cache} algorithms.

\subsection{The \textsc{SubGen4Cache} and \textsc{SubGen2Cache} Algorithms}\label{sec:tech_res:subgen_four}

Following the same framework as \cite{ho25}, we also use the tool \textsc{SubGen4Cache} algorithm which is from~\cite{zhmk24}
We present the \textsc{SubGen4Cache} (which can be viewed as a variation from previous work \cite{zhmk24,ho25}).

\begin{lemma}
[\textsc{SubGen4Cache} Algorithm for Four Cache Matrices, \cite{zhmk24,ho25}]\label{lem:subgen_four}
If the following conditions hold:
\begin{itemize}
    \item Let the input be a sequence of quintuples $\{(q_i, k_{1,i}, k_{2,i}, v_{1,i}, v_{2,i})\}_{i=1}^n$ in $(\R^d)^5$.
    \item Assume that the key embeddings $k_{1,i}$ and $k_{2,i}$ are $(m,\delta)$-clusterable.
    \item Let $\|q_i\|_2 \leq r$ for all $i \in [n]$.
\end{itemize}
Then there exists
\begin{itemize}
    \item An algorithm that uses $O(d\epsilon^{-2}\cdot (d+me^{2\delta r}\log n))$ bits of space and outputs an estimate $\wh{O}_n$ such that with probability at least $0.99$ it holds that:
    \begin{align*}
        & ~ \|\wh{O}_n - \mathsf{Attn}(q_n, K_{1,n}, K_{2,n}, V_{1,n}, V_{2,n})\|_2 \notag \\ \leq & ~ \epsilon \cdot \|\mathsf{Softmax}((K_{1,n} \oslash K_{2,n}) \cdot q_n)\|_2 \cdot \|V_{1,n} \oslash V_{2,n}\|_2
    \end{align*}
\end{itemize}
\end{lemma}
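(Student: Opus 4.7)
The plan is to adapt the SubGen-style algorithm of \cite{zhmk24, ho25} from standard attention to the tensor setting, by viewing the tensor attention of Definition~\ref{def:tensor_attention} as ordinary attention over the $n^2$ lifted pairs $\{(k_{1,a}\odot k_{2,b},\,v_{1,a}\odot v_{2,b})\}_{(a,b)\in[n]^2}$, and then running the bucket-and-sum strategy of SubGen on this lifted instance. Specifically, I would partition these lifted keys into a small number of buckets on which the softmax logit $q_n^{\top}(k_{1,a}\odot k_{2,b})$ is approximately constant, maintain a Johnson--Lindenstrauss sketch of the exponentially weighted value sum inside each bucket as tokens stream in, and recover $\wh{O}_n$ at query time by aggregating the bucket sketches.

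The bucket structure is obtained by lifting the $(m,\delta)$-clusterability assumption (Definition~\ref{def:clusterability}) through the Hadamard product. For $k_{1,a}$ in cluster $\alpha$ with center $c_\alpha^{(1)}$ and $k_{2,b}$ in cluster $\beta$ with center $c_\beta^{(2)}$, a triangle-inequality calculation gives $\|k_{1,a}\odot k_{2,b} - c_\alpha^{(1)}\odot c_\beta^{(2)}\|_2 \leq 2\delta$ under the standard bounded-norm assumption on keys. Combined with $\|q_n\|_2\leq r$ this yields $|q_n^{\top}(k_{1,a}\odot k_{2,b})-q_n^{\top}(c_\alpha^{(1)}\odot c_\beta^{(2)})|\leq 2\delta r$, so the unnormalized softmax weights within a cluster pair vary multiplicatively by at most $e^{2\delta r}$; this is precisely the source of the $e^{2\delta r}$ factor in the memory bound. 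Further refining each cluster pair into $O(e^{2\delta r})$ logit level sets, exactly as in \cite{zhmk24}, yields the $O(m\cdot e^{2\delta r})$ effective bucket count that appears in the statement.

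For each bucket, I would maintain a JL sketch (Lemma~\ref{lem:inner-product-jl-for-all}) of the running sum $\sum_{(a,b)\in\text{bucket}}\exp(q_n^{\top}(k_{1,a}\odot k_{2,b}))\,(v_{1,a}\odot v_{2,b})$, costing $O(d\epsilon^{-2})$ bits per bucket. Summing over $O(me^{2\delta r}\log n)$ buckets (where the $\log n$ accounts for a dyadic cover of logit scales across the stream, as in \cite{zhmk24, ho25}), and adding $O(d^2\epsilon^{-2})$ bits for the shared JL seed and cluster centers, the total memory matches the claimed $O(d\epsilon^{-2}(d+me^{2\delta r}\log n))$.

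The main obstacle is the composition of the two error sources: the within-bucket softmax deviation (bounded by a factor $e^{2\delta r}$ per bucket) and the JL inner-product deviation (bounded by $\epsilon$ in Lemma~\ref{lem:inner-product-jl-for-all}). The delicate step, following the SubGen analysis in \cite{zhmk24}, is to normalize the per-bucket error contributions by the total softmax mass so that summing across buckets picks up the factor $\|\mathsf{Softmax}((K_{1,n}\oslash K_{2,n})\cdot q_n)\|_2$ on the right-hand side, and to invoke Cauchy--Schwarz on the value side so that the value factor appears as $\|V_{1,n}\oslash V_{2,n}\|_2$ rather than as a worst-case per-entry norm. A union bound over the $\poly(n)$ JL events across the stream then delivers the stated $0.99$ success probability.
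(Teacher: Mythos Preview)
The paper does not supply its own proof of this lemma: it is stated as a variation of the \textsc{SubGen} guarantee from \cite{zhmk24,ho25} and then used as a black box (see Section~\ref{sec:tech_res:subgen_four} and its application in Theorem~\ref{thm:subgen_optimal_four}). So there is nothing in the paper to compare your argument against directly.

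That said, your sketch has a quantitative gap relative to the stated bound. You lift the $(m,\delta)$-clusterability of each of $\{k_{1,i}\}$ and $\{k_{2,i}\}$ to a partition of the Hadamard pairs $\{k_{1,a}\odot k_{2,b}\}$ into cluster \emph{pairs} $(\alpha,\beta)$, which produces $m^2$ buckets, not $m$. Carrying that through your memory accounting would give $O(d\epsilon^{-2}(d+m^2 e^{2\delta r}\log n))$, and when plugged into the setting of Theorem~\ref{thm:subgen_optimal_four} with $m=\lceil e^d\rceil$ it would yield $\wt O(de^{2d})$ rather than the claimed $\wt O(de^d)$. Either the lemma as stated intends ``$(m,\delta)$-clusterable'' to refer to the lifted keys $k_{1,a}\odot k_{2,b}$ themselves (in which case your bucket count is simply $m$ and the argument goes through), or the paper is being loose about $m$ versus $m^2$; your write-up should make explicit which reading you are adopting. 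A second, smaller gap is the step $\|k_{1,a}\odot k_{2,b}-c_\alpha^{(1)}\odot c_\beta^{(2)}\|_2\leq 2\delta$: the Hadamard product satisfies $\|x\odot y\|_2\leq \|x\|_2\|y\|_\infty$, so you need an $\ell_\infty$ (not just $\ell_2$) bound on the keys or centers, which is not among the stated hypotheses and should be added or justified.
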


We extend Theorem 11 of \cite{ho25} and introduce the \textsc{SubGen2Cache} algorithm with two cache matrices, as defined in Definition~\ref{def:attn_two_cache}.

\begin{lemma}[\textsc{SubGen2Cache} Algorithm for Two Cache Matrices, \cite{zhmk24,ho25}]\label{lem:subgen_two}
If the following conditions hold:
\begin{itemize}
    \item Let the input be a sequence of triples $\{(q_i, \wt{k}_i, \wt{v}_i)\}_{i=1}^{n^2}$ in $(\R^d)^3$.
    \item Assume that the key embeddings $\wt{k}_i$ are $(m,\delta)$-clusterable.
    \item Let $\|q_i\|_2 \leq r$ for all $i \in [n]$.
\end{itemize}
Then there exists
\begin{itemize}
    \item An algorithm that uses $O(d\epsilon^{-2}\cdot (d+me^{2\delta r}\log n))$ bits of space and outputs an estimate $\wh{O}_n$ such that with probability at least $0.99$ it holds that:
    \begin{align*}
        & ~ \|\wh{O}_n - \mathsf{Attn}(q_n, \wt{K}_n, \wt{V}_n)\|_2 \notag \\ \leq & ~ \epsilon \cdot \|\mathsf{Softmax}(\wt{K}_n \cdot q_n)\|_2 \cdot \|\wt{V}_n\|_2
    \end{align*}
\end{itemize}
\end{lemma}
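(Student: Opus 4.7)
\textbf{Proof Proposal for Lemma~\ref{lem:subgen_two}.}

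The plan is to reduce this statement directly to the existing \textsc{SubGen} / \textsc{SubGen4Cache} algorithm of \cite{zhmk24} (restated above as Lemma~\ref{lem:subgen_four}). The key observation is that, by Definition~\ref{def:attn_two_cache}, the two-cache tensor attention
\begin{align*}
\mathsf{Attn}(q_n, \wt{K}_n, \wt{V}_n) = \wt{V}_n^\top \cdot \sigma_{n^2}(\wt{K}_n \cdot q_n)
\end{align*}
has exactly the algebraic form of a standard (non-tensor) attention computation in which the ``sequence length'' is $N := n^2$ and the key/value matrices are the $N \times d$ matrices $\wt{K}_n$ and $\wt{V}_n$. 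In particular, nothing in the definition forces the rows of $\wt{K}_n, \wt{V}_n$ to come from a Kronecker product; they can be treated as arbitrary rows for the purposes of running \textsc{SubGen}.

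The first step is to instantiate the standard \textsc{SubGen} algorithm on the stream of $N = n^2$ triples $\{(q_i, \wt{k}_i, \wt{v}_i)\}_{i=1}^{n^2}$. By hypothesis, the keys $\{\wt{k}_i\}_{i=1}^{n^2}$ satisfy the $(m,\delta)$-clusterability condition of Definition~\ref{def:clusterability} and the queries satisfy $\|q_i\|_2 \le r$, which are exactly the two structural assumptions required by \cite{zhmk24} and by Lemma~\ref{lem:subgen_four}. Applying that algorithm with sequence length $N$ yields an estimate $\wh{O}_n$ using $O(d\epsilon^{-2}(d + m e^{2\delta r} \log N))$ bits of space, with the error bound
\begin{align*}
\|\wh{O}_n - \wt{V}_n^\top \sigma_{N}(\wt{K}_n q_n)\|_2 \le \epsilon \cdot \|\mathsf{Softmax}(\wt{K}_n q_n)\|_2 \cdot \|\wt{V}_n\|_2
\end{align*}
holding with probability at least $0.99$. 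The second step is a cosmetic simplification: since $\log N = \log(n^2) = 2\log n = O(\log n)$, the space bound collapses to $O(d\epsilon^{-2}(d + m e^{2\delta r}\log n))$, matching the statement.

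There is essentially no serious obstacle: no new clusterability analysis, no new JL argument (Lemma~\ref{lem:inner-product-jl-for-all}), and no new sketching primitive is needed, because the claimed bound already absorbs the factor $2$ coming from $\log(n^2)$ into the big-$O$. The only point that deserves a sentence of care in the write-up is that the reduction is a genuine black-box call: the algorithm does \emph{not} need to know that $\wt{K}_n = K_{1,n}\oslash K_{2,n}$ and $\wt{V}_n = V_{1,n}\oslash V_{2,n}$, because the hypotheses supply the clusterability of $\wt{k}_i$ and the norm bound on $q_i$ directly. If one wishes to emphasize the tensor origin, one can additionally remark (via Remark~\ref{rem:trade_off}) that the $O(n^2 d)$-style preprocessing is precisely the overhead being traded away for the faster inference described in Theorem~\ref{thm:time_difference}.
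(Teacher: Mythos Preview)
Your proposal is correct and matches how the paper treats this lemma: the paper does not give its own proof but states Lemma~\ref{lem:subgen_two} as a direct extension of Theorem~11 in \cite{ho25}, i.e., as a black-box application of the existing \textsc{SubGen} guarantee to the two-cache stream of length $N=n^2$. Your observation that $\log(n^2)=2\log n=O(\log n)$ absorbs the only change in parameters is exactly the point, and your remark that the algorithm needs no knowledge of the Kronecker structure is appropriate.
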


\subsection{Upper Bound on the Covering Number}\label{sec:tech_res:covering_number_upper_bound}

We state a standard tool from previous work~\cite{w17}.

\begin{lemma}[Covering Number of the Unit Sphere,~\cite{w17}]\label{lem:covering_number}
If the following conditions hold:
\begin{itemize}
    \item Let $B_2(1)$ be the unit sphere in $\R^d$ with respect to the $\ell_2$ norm.
\end{itemize}
Then we have:
\begin{itemize}
    \item The covering number of $B_2(1)$ with respect to the $\ell_2$ norm and radius $\delta \in (0,1)$ is bounded by:
    \begin{align*}
    N(B_2(1), \|\cdot\|_2, \delta) \leq ({3}/{\delta})^d
    \end{align*}
\end{itemize}
\end{lemma}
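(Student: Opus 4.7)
The plan is to use a standard volumetric packing argument. I would first construct a maximal $\delta$-packing of $B_2(1)$, that is, a maximal set of points $\{x_1, \ldots, x_N\} \subset B_2(1)$ such that $\|x_i - x_j\|_2 > \delta$ for all $i \neq j$. The finiteness of $N$ follows by compactness and the standard greedy construction. A first key observation is that maximality of this packing automatically makes the same set a $\delta$-cover of $B_2(1)$: if some $y \in B_2(1)$ had distance greater than $\delta$ from every $x_i$, then $y$ could be added to the packing, contradicting maximality. So it suffices to upper bound the packing number $N$.

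Next, I would apply a volume comparison. Since the centers $x_i$ are pairwise at distance strictly greater than $\delta$, the open Euclidean balls $B_2(x_i, \delta/2)$ are pairwise disjoint. Moreover, since each $x_i$ lies in $B_2(1)$, each ball $B_2(x_i, \delta/2)$ is contained in the larger ball $B_2(\mathbf{0}, 1 + \delta/2)$, and because $\delta \leq 1$ this is in turn contained in $B_2(\mathbf{0}, 3/2)$. Writing $\mathrm{vol}_d$ for $d$-dimensional Lebesgue measure, disjointness plus containment gives
\begin{align*}
N \cdot \mathrm{vol}_d(B_2(\delta/2)) \;\leq\; \mathrm{vol}_d(B_2(3/2)).
\end{align*}

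Finally, since $\mathrm{vol}_d(B_2(r)) = r^d \cdot \mathrm{vol}_d(B_2(1))$, the ratio simplifies to
\begin{align*}
N \;\leq\; \frac{(3/2)^d}{(\delta/2)^d} \;=\; \left(\frac{3}{\delta}\right)^d,
\end{align*}
which is exactly the claimed bound. There is no real obstacle here: the only slightly subtle step is the implication maximal packing $\Rightarrow$ cover, which is a standard and short observation, and the constant $3$ in the numerator is precisely what forces the condition $\delta \in (0,1)$ (so that $1 + \delta/2 \leq 3/2$). The rest is mechanical scaling of Lebesgue measure in $\R^d$.
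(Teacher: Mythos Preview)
Your argument is correct and is precisely the standard volumetric packing argument; the paper does not give its own proof of this lemma but simply cites it as a known result from~\cite{w17}, where the same maximal-packing-plus-volume-comparison proof appears. So your proposal matches the intended (cited) argument.
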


\subsection{Clusterability of \textsc{SubGen4Cache} and \textsc{SubGen2Cache} Algorithms}\label{sec:tech_res:subgen_clusterability}
 
We extend Lemma 14 of \cite{ho25} and present the following lemma showing the clusterability of \textsc{SubGen4Cache} and \textsc{SubGen2Cache} Algorithms.

\begin{lemma}[Clusterability of \textsc{SubGen4Cache} and \textsc{SubGen2Cache} Algorithms, Informal Version of Lemma~\ref{lem:app_subgen_clusterability}]\label{lem:subgen_clusterability}
If the following conditions hold:
\begin{itemize}
    \item Let $x_1,\hdots,x_n \in \R^d$ be a set of $n$ points in $d$-dimensional space such that $\|x_i\|_2 \leq 1$ for all $i \in [n]$.
\end{itemize}
Then:
\begin{itemize}
    \item This set is $(m,\delta)$ clusterable for $m = \lceil e^d \rceil$ and $\delta = e/3$.
\end{itemize}
\end{lemma}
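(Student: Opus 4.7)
The plan is to construct the required partition explicitly via a covering of the unit ball and then bound within-cluster diameters by the triangle inequality. The hypothesis $\|x_i\|_2 \leq 1$ places every data point in $B_2(1)$, so Lemma~\ref{lem:covering_number} is directly applicable: for any target radius $\delta' \in (0,1)$ it produces cover centers $c_1,\dots,c_N \in B_2(1)$ with $N \leq (3/\delta')^d$ such that every element of $B_2(1)$ is within $\ell_2$-distance $\delta'$ of some $c_j$.

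Given such a cover, I would define the clusters by a nearest-center assignment: for each $i$, let $j(i) := \arg\min_{j \in [N]} \|x_i - c_j\|_2$ with ties broken arbitrarily, and set $C_j := \{x_i : j(i) = j\}$. This yields a partition of $\{x_1,\dots,x_n\}$ into at most $N$ non-empty groups. The within-cluster diameter then follows immediately from the triangle inequality: for any $x, y \in C_j$,
\begin{align*}
\|x - y\|_2 \leq \|x - c_j\|_2 + \|c_j - y\|_2 \leq 2\delta'.
\end{align*}
So the construction of the partition and the diameter bound take only a few lines once the cover is in hand.

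The remaining step, and the main obstacle, is calibrating the cover radius $\delta'$ so that both constraints of Definition~\ref{def:clusterability} are met simultaneously: $2\delta' \leq \delta = e/3$ controls within-cluster distance, while $N \leq (3/\delta')^d \leq m = \lceil e^d \rceil$ controls the cluster count. The diameter constraint pushes $\delta'$ to be small (e.g.\ $\delta' = e/6$), whereas the cluster-count constraint pushes $3/\delta'$ to be at most $e$; with the generic covering bound stated in Lemma~\ref{lem:covering_number} these targets are in tension. I therefore expect the formal argument in Lemma~\ref{lem:app_subgen_clusterability} to handle this by selecting $\delta'$ so that one of the two targets is slightly relaxed and absorbed into constants, or by invoking a sharper volumetric covering bound for the unit ball. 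Once a consistent $\delta'$ is fixed, the nearest-center partition and the triangle-inequality diameter bound above complete the proof.
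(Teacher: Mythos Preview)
Your approach is essentially the paper's: invoke the covering bound of Lemma~\ref{lem:covering_number} on the unit ball and then assign each data point to a covering ball to form the partition. The paper's proof is a single line that takes the covering radius equal to $\delta$ itself (writing $(3/\delta)^d = e^d$) and does not separately address the radius-versus-diameter factor of two you raised, so the calibration tension you flagged is simply glossed over there rather than resolved.
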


\section{Space Complexity in Low-Dimensional Regime where \texorpdfstring{$d = o(\log n)$}{}}\label{sec:low_dim}

Following the similar ideas in \cite{ho25}, we also provide several results for $d = o(\log n)$. In Section~\ref{sec:low_dim:lower_bound_low_dim_four}, we establish the memory lower bound for four cache matrices when $d = o(\log n)$.
In Section~\ref{sec:low_dim:lower_bound_low_dim_two}, we establish the memory lower bound for two cache matrices when $d = o(\log n)$.
In Section~\ref{sec:low_dim:subgen_optimal_four}, we prove the optimality of the \textsc{SubGen4Cache} algorithm. 
In Section~\ref{sec:low_dim:subgen_optimal_two}, we prove the optimality of the \textsc{SubGen2Cache} algorithm.

\subsection{Lower Bound in the Low-Dimensional Regime for Four Cache Matrices}\label{sec:low_dim:lower_bound_low_dim_four}

If $d = o(\log n)$, our preceding proof breaks down because the JL projection is not able to preserve the inner products of all pairs of key vectors with high probability. Our technique, however, still yields the following lower bound with four cache matrices: 

\begin{corollary}\label{cor:lower_bound_low_dim_four}
If the following conditions hold:
\begin{itemize}
    \item Let $Z_n := \mathsf{Attn}(q_n, K_{1,n}, K_{2,n}, V_{1,n}, V_{2,n})$ as in Definition~\ref{def:attn_four_cache}.
    \item Let $d \geq 2$ with $d = o(\log n)$.
    \item Let $p \geq 9/10$ denote the probability.
\end{itemize}
Then, there exists
\begin{itemize}
    \item  Any algorithm that can, with probability $p$, produce a $(1\pm\eta)$-approximation $\mathcal{O} \in \R^d$ of $Z_n$ for $\eta \in (0,1)$ must use at least $\Omega(e^d \cdot d)$ bits of memory.
\end{itemize}
 
\end{corollary}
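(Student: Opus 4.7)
My plan is to adapt the reduction-from-\textsc{Index} argument that underlies Theorem~\ref{thm:tensor_lower_bound_four_matrices} to the low-dimensional regime by replacing the Johnson--Lindenstrauss embedding (Lemma~\ref{lem:inner-product-jl-for-all}), which breaks down when $d = o(\log n)$, with an explicit packing of the unit sphere. The lower-bound counterpart of Lemma~\ref{lem:covering_number} yields, for any constant separation $\epsilon = \Theta(1)$, a family $\{u_1,\dots,u_m\}\subset\R^d$ of size $m = \Omega(e^d)$ with $\langle u_j,u_{j'}\rangle \leq 1 - \Omega(1)$ for every $j\neq j'$. These near-orthogonal directions will serve as the ``addressable slots'' of the encoding.

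First, I would reduce from \textsc{Index} on $N = m\cdot d = \Theta(e^d\cdot d)$ bits. Alice partitions her input string $x\in\{0,1\}^N$ into $m$ blocks of length $d$ and encodes block $j$ as a value vector $v_{1,j}\in\{-1,+1\}^d$. To collapse the tensor structure onto the packing, she sets every row of $K_{2,n}$ and $V_{2,n}$ equal to $\mathbf{1}_d$; direct expansion of Definition~\ref{def:tensor_oslash} then shows that $K_{1,n}\oslash K_{2,n}$ consists of $n$ identical replications of the rows of $K_{1,n}$ (and similarly for $V_{1,n}\oslash V_{2,n}$), so $\mathsf{Attn}(q_n, K_{1,n}, K_{2,n}, V_{1,n}, V_{2,n})$ evaluates exactly to the standard attention on the pair $(K_{1,n}, V_{1,n})$, since the block-replication cancels inside the softmax. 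Setting $k_{1,j} = c\, u_j$ for $j\leq m$ (and padding the remaining rows with a fixed null direction and null value $\mathbf{0}_d$), with $c = \Theta(\log n)$, ensures that the softmax places $1-o(1)$ of its mass on the block keyed by $u_j$ whenever $q_n = c\, u_j$, so the output approximates $v_{1,j}$ coordinate-wise within additive error $o(1)$.

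Second, Bob, given his index, recovers the target block $j$ and coordinate $t\in[d]$, resumes the algorithm from the memory state sent by Alice, feeds in $q_n = c\, u_j$, and outputs $\mathrm{sign}(\mathcal{O}_t)$. A multiplicative $(1\pm\eta)$ approximation with $\eta<1/2$ preserves the sign of each coordinate of $v_{1,j}$ under the additive $o(1)$ slack, so Bob recovers his bit with probability at least $9/10$. Invoking Lemma~\ref{lem:index_lb} then forces the message, and hence the algorithm's memory, to carry $\Omega(N) = \Omega(e^d\cdot d)$ bits.

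The main obstacle will be calibrating the scale $c$: because the packing only guarantees a constant gap $\langle u_j,u_j\rangle - \langle u_j,u_{j'}\rangle = \Omega(1)$, one must verify that $c\cdot\Omega(1) \gtrsim \log(mn)$ suffices to drive the total off-block softmax mass below $o(1/n)$, so that the $O(mn)$ off-block value rows (with entries of absolute value at most $1$) do not swamp the signal. Since $d = o(\log n)$ gives $\log m = O(d) = o(\log n)$, the choice $c = \Theta(\log n)$ accomplishes both, and this softmax concentration estimate is the only nontrivial quantitative step of the argument.
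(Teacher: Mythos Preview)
Your proposal is correct and takes essentially the same approach the paper implicitly relies on. The paper does not spell out a proof of this corollary---it simply cites Corollary~9 of \cite{ho25} and says the argument is similar---but your construction (replacing the JL step of Lemma~\ref{lem:inner-product-jl-for-all} with a deterministic sphere packing of size $\Theta(e^d)$, reducing from \textsc{Index} on $m\cdot d$ bits, and collapsing the tensor structure via $K_{2,n}=V_{2,n}=\mathbf{1}_d$, the same device the paper uses in the proof of Theorem~\ref{thm:app_tensor_lower_bound_four_matrices}) is exactly the expected adaptation.
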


\subsection{Lower Bound in the Low-Dimensional Regime for Two Cache Matrices}\label{sec:low_dim:lower_bound_low_dim_two}

Similarly, our technique still provides the following lower bound with two cache matrices when $d = o(\log n)$.
\begin{corollary}\label{cor:lower_bound_low_dim_two}
If the following conditions hold:
\begin{itemize}
    \item Let $Z_n := \mathsf{Attn}(q_n, \wt{K}_n, \wt{V}_n)$ as in Definition~\ref{def:attn_two_cache}.
    \item Let $d \geq 2$ with $d = o(\log n)$,
    \item Let $p \geq 9/10$ denote the probability.
\end{itemize}
Then, there exists
\begin{itemize}
    \item Any algorithm that can, with probability $p$, produce a $(1\pm\eta)$-approximation $\mathcal{O} \in \R^d$ of $Z_n$ for $\eta \in (0,1)$ must use at least $\Omega(e^d \cdot d)$ bits of memory.
\end{itemize}
\end{corollary}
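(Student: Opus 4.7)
The plan is to adapt the reduction from the \textsc{Index} problem used for Corollary~\ref{cor:lower_bound_low_dim_four} to the two-cache setting. The key observation is that the two-cache keys $\wt{k}_i = k_{1,a} \odot k_{2,b}$ are fully expressive in $\R^d$: by fixing $k_{1,1} = \mathbf{1}_d$, each $\wt{k}_i$ in the ``active'' row of the Kronecker construction can be set to any prescribed unit vector of $\R^d$, just as a single key in the four-cache case.

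First, by a standard volume lower bound on the packing number of the unit sphere (dual to the covering upper bound in Lemma~\ref{lem:covering_number}), for an absolute constant $\delta_0 \in (0,1)$ there exists a $\delta_0$-separated packing $P = \{p_1,\ldots,p_M\} \subseteq B_2(1) \subset \R^d$ of size $M = \Omega((1/\delta_0)^d) = \Omega(e^d)$. Given an \textsc{Index} instance where Alice holds $x \in \{0,1\}^{Md}$ and Bob holds an index $(j,\ell) \in [M]\times[d]$, Alice constructs the attention inputs so that the distinct effective keys are exactly $p_1,\ldots,p_M$; the value vector attached to $p_b$ encodes the $d$-bit substring $(x_{(b-1)d+1},\ldots,x_{bd})$, rescaled to have $\ell_2$ norm at most $1$. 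Bob's query is $q_n = C \cdot p_j$ with $C = \Theta(\log(M/\eta))$.

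Since $P$ is $\delta_0$-separated, $\langle Cp_j, p_j\rangle = C$ while $\langle Cp_j, p_{j'}\rangle \le C(1 - \delta_0^2/2)$ for $j' \neq j$, so for $C$ sufficiently large the softmax concentrates on $p_j$ with mass $1 - O(\eta)$. Any $(1\pm\eta)$-approximation $\mathcal{O}$ of $Z_n$ therefore lies within $O(\eta)$ of the value vector attached to $p_j$ in $\ell_2$ norm, and with a sufficiently small implicit constant on $\eta$ Bob can read off $x_{(j-1)d+\ell}$ from the $\ell$-th coordinate of $\mathcal{O}$ with probability at least $9/10$. Combining with Lemma~\ref{lem:index_lb}, any such algorithm must use $\Omega(Md) = \Omega(e^d \cdot d)$ bits of memory.

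The main technical obstacle is controlling the ``off-diagonal'' Kronecker products $k_{1,a} \odot k_{2,b}$ for $a \neq 1$, which would otherwise contaminate the softmax weights in the two-cache construction. I would neutralize them by taking $k_{1,a} = \varepsilon \cdot \mathbf{1}_d$ for $a \neq 1$ with $\varepsilon$ exponentially small in $n$, so that $|\langle q_n, k_{1,a} \odot k_{2,b}\rangle| \le C\varepsilon$ is negligible compared to the intended $\langle q_n, p_j\rangle = C$, and additionally setting the corresponding values to $\mathbf{0}_d$ so that any residual softmax mass does not perturb the output. The rest of the argument parallels the four-cache case, and in particular the additional $n^2$ rows of $\wt{K}_n$ do not weaken the bound because at most $M$ of them can be made mutually distinguishable in the softmax.
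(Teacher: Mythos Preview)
Your proposal is correct and follows the same high-level route as the paper, which simply defers to Corollary~9 of \cite{ho25}: replace the JL vectors from the high-dimensional proof by a $\delta_0$-packing of size $M=\Omega(e^d)$ and run the \textsc{Index} reduction on $Md$ bits.

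The one substantive difference is how the two-cache structure is treated. In the paper's streaming model for the two-cache case (see the proof of Theorem~\ref{thm:app_tensor_lower_bound_two_matrices}), Alice inserts arbitrary triples $(q_t,\wt{k}_t,\wt{v}_t)$ for $t\in[n^2]$; the Kronecker identity $\wt{K}_n=K_{1,n}\oslash K_{2,n}$ is not enforced on the hard instance. Under that reading the two-cache problem is literally standard attention with $n^2$ slots, so one sets $\wt{k}_t=p_t$ for $t\le M$ and zero otherwise, and your ``off-diagonal'' obstacle never appears. Your construction instead respects the Kronecker constraint by planting the packing in $K_{2}$ and neutralizing rows $a\neq 1$ of $K_1$; since you also zero the corresponding value rows, the extra $n^2-M$ softmax entries only rescale the active weights and do not affect the decoding threshold. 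This buys you a lower bound that holds even against algorithms promised the Kronecker structure, which is arguably the more faithful reading of Definition~\ref{def:attn_two_cache}, at the cost of a slightly longer argument. Two small points: take the packing on the unit \emph{sphere} (not the ball) so that $\langle p_j,p_j\rangle=1$ and your inner-product gap $1-\delta_0^2/2$ is valid; and you may simply set $k_{1,a}=\mathbf{0}_d$ for $a\neq 1$ rather than $\varepsilon\cdot\mathbf{1}_d$, since nothing in the reduction requires those rows to be nonzero.
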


\subsection{Optimality of \textsc{SubGen4Cache} in the Low-Dimensional Regime}\label{sec:low_dim:subgen_optimal_four}
The \textsc{SubGen4Cache} algorithm explores the inherent geometric structure of embeddings in low-dimensional spaces, where data naturally clusters into a limited number of groups. This key insight allows the algorithm to achieve exponentially better space complexity than would be possible in higher dimensions, while still maintaining strong approximation guarantees.

\begin{theorem}[Optimality of \textsc{SubGen4Cache}]\label{thm:subgen_optimal_four}
If the following conditions hold:
\begin{itemize}
    \item Let $2 \leq d = o(\log n)$.
    \item Define $\delta := e/3$.
    \item Define $r := O(\log\log n)$.
    \item Assume $\|k_{1,i}\|_2 \leq 1$ and $\|k_{2,i}\|_2 \leq 1$ for all $i \in [n]$, where $k_{1,i}$ and $k_{2,i}$ are key embeddings in $\mathsf{Attn}(q_n, K_{1,n}, K_{2,n}, V_{1,n}, V_{2,n})$.
\end{itemize}
Then:
\begin{itemize}
    \item The \textsc{SubGen4Cache} algorithm of Lemma~\ref{lem:subgen_four} approximates the attention function  with space complexity $\wt{O}(de^d)$.
\end{itemize}
\end{theorem}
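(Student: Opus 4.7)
The plan is to combine the generic space bound of \textsc{SubGen4Cache} from Lemma~\ref{lem:subgen_four} with the unit-ball clusterability statement in Lemma~\ref{lem:subgen_clusterability}, and then verify that under the hypotheses $d = o(\log n)$ and $r = O(\log\log n)$ every factor in the space bound collapses to $\widetilde{O}(d e^d)$. Since the theorem concerns the four-cache formulation, I need clusterability of both families of key embeddings $\{k_{1,i}\}_{i \in [n]}$ and $\{k_{2,i}\}_{i \in [n]}$, but the assumption $\|k_{1,i}\|_2, \|k_{2,i}\|_2 \leq 1$ lets me apply Lemma~\ref{lem:subgen_clusterability} separately to each family, yielding the common parameters $m = \lceil e^d \rceil$ and $\delta = e/3$.

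First I would invoke Lemma~\ref{lem:subgen_clusterability} twice: once for $\{k_{1,i}\}_{i=1}^n \subset B_2(1) \subset \R^d$ and once for $\{k_{2,i}\}_{i=1}^n$. This supplies the clusterability hypothesis required by Lemma~\ref{lem:subgen_four} with $m = \lceil e^d \rceil$ and $\delta = e/3$. Next I would plug $m$, $\delta$, and $r = O(\log \log n)$ into the space bound $O(d\epsilon^{-2}(d + m e^{2\delta r}\log n))$. The key simplification is to observe that
\begin{align*}
e^{2\delta r} = e^{\frac{2e}{3}\cdot O(\log\log n)} = (\log n)^{O(1)},
\end{align*}
so $m e^{2\delta r} \log n = e^d \cdot \poly(\log n)$. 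Treating the approximation parameter $\epsilon$ as a constant and noting that $d = o(\log n)$ is dominated by $e^d \cdot \poly(\log n)$, the overall space becomes
\begin{align*}
O\bigl(d(d + e^d \poly(\log n))\bigr) = \widetilde{O}(d e^d),
\end{align*}
where the $\widetilde{O}$ hides $\poly(\log n)$ factors, matching the claimed bound.

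The main obstacle is really just bookkeeping: one must be slightly careful in arguing that the clusterability hypothesis in Lemma~\ref{lem:subgen_four} is satisfied when there are two separate key families, and in tracking that $d = o(\log n)$ keeps the additive $d$ absorbed by the $e^d$ term while $r = O(\log\log n)$ keeps $e^{2\delta r}$ polylogarithmic. No other nontrivial estimate is required, so the proof reduces to substituting the correct parameters into an already-established lemma; the work is not in a clever construction but in confirming that each of the stated assumptions on $d$, $r$, and the norms of the $k_{1,i}, k_{2,i}$ is exactly what Lemma~\ref{lem:subgen_four} and Lemma~\ref{lem:subgen_clusterability} consume.
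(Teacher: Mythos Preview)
Your proposal is correct and follows essentially the same approach as the paper: apply Lemma~\ref{lem:subgen_clusterability} to obtain $(e^d, e/3)$-clusterability of the key embeddings, plug $m = e^d$, $\delta = e/3$, and $r = O(\log\log n)$ into the space bound of Lemma~\ref{lem:subgen_four}, and simplify $O(d\epsilon^{-2}(d + e^{d + \frac{2e}{3}\log\log n}\log n))$ to $\wt{O}(de^d)$. The paper additionally remarks that combining this upper bound with Corollary~\ref{cor:lower_bound_low_dim_four} yields optimality, which you may wish to mention given the theorem's title.
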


\begin{proof}
By Lemma~\ref{lem:subgen_clusterability}, the key embeddings $k_{1,i}$ and $k_{2,i}$ are $(e^d, e/3)$-clusterable, and so we can apply Lemma~\ref{lem:subgen_four}, the space complexity of the \textsc{SubGen4Cache} algorithm is:
\begin{align*}
    O(d\epsilon^{-2}\cdot (d+e^{d+2\frac{e}{3} \log\log n}\log n)) = \wt{O}(de^d)
\end{align*}
The approximation guarantees are as in Lemma~\ref{lem:subgen_four}, though with some algebra, they could be extended to arbitrary absolute error approximation. Combining this with Corollary~\ref{cor:lower_bound_low_dim_four}, we see that \textsc{SubGen4Cache} is essentially optimal in the low-dimensional regime for the four cache matrices case.
\end{proof}

\subsection{Optimality of \textsc{SubGen2Cache} in the Low-Dimensional Regime}\label{sec:low_dim:subgen_optimal_two}
We now demonstrate that the \textsc{SubGen2Cache} algorithm is essentially optimal in the low-dimensional case.

\begin{theorem}[Optimality of \textsc{SubGen2Cache}]\label{thm:subgen_optimal_two}
If the following conditions hold:
\begin{itemize}
    \item Let $2 \leq d = o(\log n)$.
    \item Define $\delta := e/3$.
    \item Define $r := O(\log\log n)$.
    \item Assume $\|k_{1,i}\|_2 \leq 1$ and $\|k_{2,i}\|_2 \leq 1$ for all $i \in [n]$, where $k_{1,i}$ and $k_{2,i}$ are key embeddings underlying $\wt{K}_n = K_{1,n} \oslash K_{2,n}$ in $\mathsf{Attn}(q_n, \wt{K}_n, \wt{V}_n)$.
\end{itemize}
Then:
\begin{itemize}
    \item The \textsc{SubGen2Cache} algorithm of Lemma~\ref{lem:subgen_two} approximates the attention function  with space complexity $\wt{O}(de^d)$.
\end{itemize}
\end{theorem}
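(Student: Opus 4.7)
The plan is to follow the proof of Theorem~\ref{thm:subgen_optimal_four} nearly verbatim, adapting only the clusterability argument so that it applies to the merged key matrix $\wt{K}_n = K_{1,n} \oslash K_{2,n}$. The three ingredients I will need are (i) a norm bound on the Hadamard-product rows $\wt{k}_i = k_{1,a} \odot k_{2,b}$, (ii) the upper bound supplied by Lemma~\ref{lem:subgen_two}, and (iii) the matching lower bound from Corollary~\ref{cor:lower_bound_low_dim_two}.

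First, I would verify that the $n^2$ key embeddings $\wt{k}_i \in \R^d$ lie in the unit ball. Since $|\wt{k}_{i,j}| = |k_{1,a,j}| \cdot |k_{2,b,j}|$ and both factors have $\ell_2$ norm at most $1$, one gets
\begin{align*}
\|\wt{k}_i\|_2^2 \;=\; \sum_{j=1}^d k_{1,a,j}^2 k_{2,b,j}^2 \;\leq\; \|k_{1,a}\|_\infty^2 \cdot \|k_{2,b}\|_2^2 \;\leq\; \|k_{1,a}\|_2^2 \cdot \|k_{2,b}\|_2^2 \;\leq\; 1.
\end{align*}
Therefore the set $\{\wt{k}_i\}_{i=1}^{n^2}$ satisfies the hypothesis of Lemma~\ref{lem:subgen_clusterability}, which then gives $(\lceil e^d \rceil, e/3)$-clusterability.

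Second, I would plug $m = \lceil e^d \rceil$, $\delta = e/3$, and $r = O(\log\log n)$ into Lemma~\ref{lem:subgen_two}. The resulting space bound is
\begin{align*}
O\!\left(d\epsilon^{-2}\cdot\bigl(d + m e^{2\delta r}\log n\bigr)\right) \;=\; O\!\left(d\epsilon^{-2}\cdot\bigl(d + e^{\,d + (2e/3)\log\log n}\log n\bigr)\right) \;=\; \wt{O}(d e^d),
\end{align*}
since the $e^{(2e/3)\log\log n}\log n$ factor is polylogarithmic in $n$ and therefore absorbed by $\wt{O}(\cdot)$. This matches, up to log factors, the lower bound $\Omega(e^d \cdot d)$ from Corollary~\ref{cor:lower_bound_low_dim_two}, from which the claimed optimality follows.

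There is no serious obstacle: the only non-cosmetic difference from the four-cache proof is the Hadamard-product norm check in step one. The potential pitfall to watch for is that the sequence fed to \textsc{SubGen2Cache} has length $n^2$ rather than $n$, so one should confirm that $\log(n^2) = 2\log n$ does not inflate the bound; it does not, since it only changes the hidden constant inside $\wt{O}(\cdot)$. Once clusterability is in hand, the calculation is mechanical and the conclusion is immediate.
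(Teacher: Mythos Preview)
Your proposal is correct and follows essentially the same route as the paper: clusterability via Lemma~\ref{lem:subgen_clusterability}, the space bound from Lemma~\ref{lem:subgen_two}, and optimality via Corollary~\ref{cor:lower_bound_low_dim_two}. Your Hadamard-norm check is a small but genuine addition---the paper's proof simply asserts that $k_{1,i}$ and $k_{2,i}$ are $(e^d,e/3)$-clusterable (copied verbatim from the four-cache argument) without explicitly verifying that the merged keys $\wt{k}_i$ lie in the unit ball, which is what Lemma~\ref{lem:subgen_two} actually requires.
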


\begin{proof}
By Lemma~\ref{lem:subgen_clusterability}, the key embeddings $k_{1,i}$ and $k_{2,i}$ are $(e^d, e/3)$-clusterable, and so we can apply Lemma~\ref{lem:subgen_two}, the space complexity of the \textsc{SubGen2Cache} algorithm is: 
\begin{align*}
O(d\epsilon^{-2}\cdot (d+e^{d+2\frac{e}{3} \log\log n}\log n)) = \wt{O}(de^d).
\end{align*}

The approximation guarantees are as in Lemma~\ref{lem:subgen_two}, though with some algebra, they could be extended to arbitrary absolute error approximation. Combining this with Corollary~\ref{cor:lower_bound_low_dim_two}, we see that \textsc{SubGen2Cache} is essentially optimal in the low-dimensional regime for the two cache matrices case.
\end{proof}

\section{Conclusion}\label{sec:conclusion}
In this paper, we have addressed the critical memory challenges posed by the key-value (KV) cache in the tensor version of Transformers, expanding the foundational space complexity barriers from standard attention to tensor attention. Through a reduction from communication complexity, we derived information-theoretic bounds that reveal how memory consumption scales under higher-order token correlations. Our results demonstrate that when $d = \Omega(\log n)$, there exists a running time and memory trade-off between two cache scenarios. The four-cache-matrix case requires a space lower bound of $\Omega(nd)$ and the two-cache-matrix case requires $\Omega(n^2 d)$, while the two-cache-matrix case computation is faster by $\Omega(n^2 d)$ compared to the four-cache-matrix case. By underscoring these theoretical and algorithmic insights, we aim to guide the development of more memory-efficient tensor attention Transformer architectures.





\newpage
\onecolumn
\appendix

\begin{center}
    \textbf{\LARGE Appendix }
\end{center}

\section{More Related Work}\label{sec:more_related_work}

\paragraph{Large Language Models.}
Transformer architectures have quickly established themselves as the prevailing approach for addressing numerous natural language processing (NLP) challenges, owing to their expandability, adaptability, and capacity to identify intricate linguistic structures~\cite{vsp+17}. When these models expand to contain billions or trillions of parameters and undergo training on extensive, heterogeneous datasets, they are designated as large language models (LLMs) or foundation models~\cite{bha+21}. These LLMs are crafted to broadly apply across various downstream tasks, exhibiting remarkable versatility and effectiveness. Notable examples include BERT, which transformed contextual language comprehension~\cite{dclt19}, PaLM, which demonstrates excellence in multilingual and multitask functionalities~\cite{cnd+23}, Llama, which is enhanced for more efficient implementation in academic and commercial settings~\cite{tli+23}, along with widely-utilized systems such as ChatGPT and GPT-4, both of which have advanced the capabilities of conversational AI~\cite{chatgpt,gpt4}. These expansive models have shown exceptional transferability across an extensive range of downstream applications, spanning from machine translation and question-answering to summarization, text generation, and more sophisticated reasoning challenges~\cite{bce+23}. As LLMs continue to develop, various strategies have emerged to improve their adaptability and tailor them for particular use cases. A prevalent method involves incorporating adapter modules, which permit fine-tuning for new tasks without altering the entire model~\cite{eyp+22, zhz+23, ghz+23}.

Calibration approaches have also been introduced to refine model predictions for enhanced reliability across diverse inputs and environments~\cite{zwf+21, cpp+23}. To further enhance the effectiveness of LLMs in practical applications, multitask fine-tuning has gained significance, allowing models to be trained simultaneously on multiple related tasks, improving their performance across these domains~\cite{gfc+21b, vnr+23}. Complementing this are prompt-tuning techniques, where the input prompt is strategically designed to direct the model's response, enabling adaptation without extensive retraining~\cite{gfc+21b, lac+21}.

\paragraph{Theoretical Machine Learning.}
Our research is also informed by the following Machine Learning Theory work. Several studies examine neural network expressiveness through circuit complexity theory~\cite{lls+25_gnn,kll+25_var_tc0,lls+24_rope_tensor_tc0,cll+24_mamba,cll+24_rope,gkl+25_flowar_tc0}. Various works focus on optimizing algorithms that can enhance neural network training speed~\cite{llsz24,klsz24,dlms24,dswy22_coreset,haochen3,haochen4,dms23_spar,cll+25_deskreject,sy23,swyy23,lss+22,lsx+22,hst+22,hsw+22,hst+20,bsy23,dsy23,syyz23_weighted,gsy23_coin,gsy23_hyper,gsyz23,gswy23,syzz24,lsw+24,lsxy24,hsk+24,hlsl24,cll+25_universal_approximator,ccl+25,cgl+25}. Additional research analyzes neural networks through regression frameworks~\cite{cll+24_icl,gms23,lsz23_exp,gsx23,ssz23_tradeoff,css+23,syyz23_ellinf,syz23,swy23,syz23,lls+25_grok}. Some researchers employ reinforcement learning to enhance neural networks~\cite{haochen1,haochen2,yunfan1,yunfan2,yunfan3,yunfan4,lswy23}. Other works focus specifically on optimizing attention mechanisms~\cite{sxy23,lls+24_conv}.

\paragraph{Accelerating Attention Mechanisms.}
The attention mechanism encounters considerable computational hurdles due to its quadratic complexity relative to context length as sequence lengths increase in contemporary large language models~\cite{gpto1,llama3_blog,claude3_pdf}. To address this limitation, researchers have created polynomial kernel approximation methods~\cite{aa22} that utilize low-rank approximations to efficiently compute the attention matrix. These approaches significantly enhance computational efficiency, enabling attention layers to function with nearly linear time complexity during both training and inference~\cite{as23, as25_nips}. Such techniques have been successfully extended to more intricate attention variants, like tensor attention, while preserving almost linear computational efficiency~\cite{as24_iclr}.~\cite{kll+25} shows an almost linear time algorithm for enhancing VAR Transformer inference. Additional innovations in this area include RoPE-based attention mechanisms\cite{as24_rope,chl+24_rope} and privacy-preserving cross-attention techniques~\cite{lssz24_dp}. The conv-basis approach presented in~\cite{lls+24_conv} provides another method to improve attention computation speed, offering complementary solutions to this performance constraint. Researchers have also explored various pruning-based strategies to optimize attention mechanisms~\cite{lls+24_prune,cls+24,llss24_sparse,ssz+25_prune,ssz+25_dit,hyw+23,whl+24,xhh+24,ssz+25_prune}.

\paragraph{Gradient Approximation.}
Low-rank approximation represents a widely adopted approach for enhancing transformer training by decreasing computational complexity~\cite{lss+24,lssz24_tat,as25_nips,hwsl24,cls+24,lss+24_grad}. Expanding on the low-rank framework introduced in~\cite{as23}, which initially concentrated on forward attention computation,~\cite{as25_nips} extends this method to approximate attention gradients, effectively reducing the computational cost of gradient calculations. The research in~\cite{lss+24} further develops this low-rank gradient approximation for multi-layer transformers, demonstrating that backward computations in such architectures can achieve nearly linear time complexity. Additionally,~\cite{lssz24_tat} generalizes the approach of~\cite{as25_nips} to tensor-based attention models, utilizing forward computation results from~\cite{as24_iclr} to enable efficient training of tensorized attention mechanisms. Finally,~\cite{hwsl24} implements low-rank approximation techniques during the training of Diffusion Transformers (DiTs), illustrating the adaptability of these methods across various transformer-based architectures.
\section{Space Lower Bound}\label{sec:app_mem_lb}

In this section, we present our lower bound results for memory usage when computing or approximating the attention function. Section~\ref{sec:app_mem_lb:mem_lb_four} and~\ref{sec:app_mem_lb:mem_lb_two} establish memory lower bounds for computing the tensor attention function with four cache matrices and two cache matrices respectively.
Sections~\ref{sec:app_mem_lb:approx_alg_four} and~\ref{sec:app_mem_lb:approx_alg_two} extend these bounds to approximate algorithms.
Sections~\ref{sec:app_mem_lb:lower_bound_low_dim_four} and~\ref{sec:app_mem_lb:lower_bound_low_dim_two} analyze the lower bound in the low-dimensional regime ($d = o(\log n)$).
Section~\ref{sec:app_mem_lb:covering_number_def} defines the covering number.
Section~\ref{sec:app_mem_lb:subgen_clusterability} establishes the inherent clusterability of \textsc{SubGen4Cache} and \textsc{SubGen2Cache} algorithms.

\subsection{Memory Lower Bound for Four Cache Matrices Case}\label{sec:app_mem_lb:mem_lb_four}
We now establish a memory lower bound for computing the tensor attention function with four cache matrices. This result demonstrates that any algorithm achieving high-probability correctness requires linear memory in both the sequence length and embedding dimension.

\begin{theorem}[Memory Lower bound for Four Cache Matrices case, Formal Version of Theorem~\ref{thm:tensor_lower_bound_four_matrices}]\label{thm:app_tensor_lower_bound_four_matrices}
    If the following conditions hold:
    \begin{itemize}
        \item Let $C_u > 1$ be a universal constant.
        \item Let $d \geq C_u\log n$.
        \item Let the output of any algorithm produces $o_n \in \R^d$ where $o_n = \mathsf{Attn}(q_n,K_{1,n},K_{2,n},V_{1,n},V_{2,n})$.
    \end{itemize}
    Then we have:
    \begin{itemize}
        \item Any algorithm producing $o_n$ with probability at least $\frac{9}{10}$ use $\Omega(nd)$ bits of memory.
    \end{itemize}
\end{theorem}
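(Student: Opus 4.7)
The plan is to prove the $\Omega(nd)$ lower bound by a reduction from the \textsc{Index} communication problem on a binary string of length $nd$ (which by Lemma~\ref{lem:index_lb} has one-way randomized communication complexity $\Omega(nd)$). Suppose an algorithm $\mathcal{A}$ produces $o_n = \mathsf{Attn}(q_n, K_{1,n}, K_{2,n}, V_{1,n}, V_{2,n})$ correctly with probability $\geq 9/10$ while using $s$ bits of memory. I will give a protocol in which Alice encodes her $nd$ input bits into the four cache matrices, runs $\mathcal{A}$'s streaming state up to position $n{-}1$ (using $s$ bits), and sends this state to Bob; Bob then appends a carefully chosen query token $q_n$, finishes the computation, and reads off the bit his index selects. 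Matching the $s \geq \Omega(nd)$ lower bound from \textsc{Index} then yields the theorem.

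First I would construct the ingredients. Using Lemma~\ref{lem:inner-product-jl-for-all} with $d \geq C_u \log n$, I obtain (using public randomness, which does not affect the one-way bound) unit vectors $u_1,\dots,u_n \in \R^d$ that are $\epsilon$-nearly orthogonal with $\epsilon = 1/\poly(n)$. Given her input $x \in \{0,1\}^{nd}$ viewed as $x_{i,j}$ for $i\in[n], j\in[d]$, Alice then sets for each $i \in [n-1]$
\begin{align*}
 k_{1,i} = u_i, \qquad k_{2,i} = \tfrac{1}{\sqrt{d}}\mathbf{1}_d, \qquad v_{1,i} = (x_{i,1},\dots,x_{i,d})^\top, \qquad v_{2,i} = \mathbf{1}_d,
\end{align*}
and she feeds these tokens into $\mathcal{A}$ one by one. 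The crucial choice is that $k_{2,i}$ is \emph{constant} in $i$: this collapses the tensor structure in the key, because for every pair $(a,b)\in[n]^2$,
\begin{align*}
 k_{1,a}\odot k_{2,b} = \tfrac{1}{\sqrt{d}}\,u_a,
\end{align*}
so the $n^2$ softmax scores depend only on $a$. Still, Alice has freely stored all $nd$ bits inside the value matrix $V_{1,n}$, which is the quantity whose recovery forces the memory cost.

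For decoding, Bob receives $\mathcal{A}$'s state and, given his index $(i^*, j^*)$, appends the final token with $k_{1,n}, k_{2,n}, v_{1,n}, v_{2,n}$ set to copies of earlier dummy tokens (for example $u_1, \mathbf{1}_d/\sqrt{d}, \mathbf{0}_d, \mathbf{0}_d$) and query $q_n = C\sqrt{d}\,u_{i^*}$ for a threshold $C = \Theta(\log n)$. The $(a,b)$-th softmax logit is $C\langle u_a, u_{i^*}\rangle$, which equals $C$ when $a=i^*$ and has magnitude at most $C\epsilon = o(1)$ otherwise; for $C$ large enough the total softmax mass on rows with $a \neq i^*$ is $1/\poly(n)$. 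Because the scores are $b$-independent, the remaining mass distributes essentially uniformly among the $n$ rows $(i^*, b)$ for $b \in [n]$, so
\begin{align*}
 o_n \;\approx\; \frac{1}{n}\sum_{b=1}^{n}\bigl(v_{1,i^*}\odot v_{2,b}\bigr) \;=\; v_{1,i^*}\odot \mathbf{1}_d \;=\; v_{1,i^*},
\end{align*}
and Bob outputs $\mathbbm{1}\{(o_n)_{j^*} > 1/2\} = x_{i^*, j^*}$ with probability $\geq 9/10$.

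The main obstacle will be the quantitative tuning: I need to ensure that the product $C\epsilon$ from the JL slack stays $o(1)$ while simultaneously $e^{C(1-\epsilon)}$ dominates $n^2 e^{C\epsilon}$, so that the softmax puts at most $1/\poly(n)$ mass off the $\{i^*\}\times[n]$ block and the resulting per-coordinate error in $o_n$ is strictly below $1/2$. This is exactly where the hypothesis $d = \Omega(\log n)$ is used, since it is what makes the JL inner-product error small enough in Lemma~\ref{lem:inner-product-jl-for-all} to support a choice of $C$ in the window $\Theta(\log n)$. A secondary technical point is handling the public randomness used to sample the $u_i$'s: by a standard averaging argument a fixed good draw exists, making the protocol a proper private-coin one-way protocol, after which Lemma~\ref{lem:index_lb} applied to a bit string of length $nd$ forces $s = \Omega(nd)$.
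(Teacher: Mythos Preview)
Your approach is essentially identical to the paper's: both reduce from \textsc{Index} on $\Theta(nd)$ bits by storing the bits in $V_{1}$, setting the $K_2$ and $V_2$ rows to constant all-ones vectors so the tensor key collapses to a standard key, taking the $k_{1,i}$'s to be JL-near-orthogonal, and having Bob query with $q \propto k_{1,i^*}$ so the softmax spikes on row $i^*$. One small fix is needed: your ``for example'' choice $k_{1,n}=u_1$ for Bob's appended token is unsafe, since when $i^*=1$ the softmax mass splits evenly between $a=1$ and $a=n$, driving the $j^*$-th output coordinate to roughly $\tfrac12 x_{1,j^*}$ and defeating your $1/2$ threshold --- the paper instead sets all of Bob's appended cache vectors to $\mathbf{0}_d$, which you should do as well.
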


\begin{proof}
    We construct a reduction from \textsc{Index}, in which Alice holds a bit string $x \in \{0,1\}^{n\times d}$ and Bob holds an index $(i,j) \in [n]\times [d]$. Alice sends a single message to Bob, whose goal is to output $x_{ij}$ with probability at least $2/3$. 
    
    We know by Lemma~\ref{lem:index_lb} that the one-way, randomized communication complexity of this problem is $\Omega(nd)$. Our goal is to design a protocol for \textsc{Index} by using a supposed algorithm $\mathcal{A}$ for calculating the attention function that uses $S$ bits of memory. Having that reduction in place, Alice simply communicates these $S$ bits of the algorithm's memory tape to Bob, allowing us to show that $S = \Omega(nd)$, and thus proving the theorem.

     \paragraph{Alice} Alice begins by inserting the following $n$ quintuples $\{(q_t,k_{1,t},k_{2,t},v_{1,t},v_{2,t})\}_{t=1}^n$ of vectors in $\R^d$ to $\mathcal{A}$:
    \begin{itemize}
         \item $q_1,\hdots,q_n$ are all the zero vector in $\R^d$.
         \item $k_{1,1},\hdots,k_{1,n} \in \R^d$ are calculated as $d$-dimensional projections of the orthonormal basis $e_1,\hdots,e_n$ of $\R^n$ that approximately preserve orthonormality. Specifically, using Lemma~\ref{lem:inner-product-jl-for-all}, we produce vectors such that with probability at least $1-\frac{1}{n}$ it holds for all $t\neq s$ that:
        \begin{align*}
        |k_{1,t}^\top k_{1,s}| \leq \epsilon
        \end{align*}
        and for all $t \in [n]$ that:
        \begin{align*}
        |k_{1,t}^\top k_{1,t} - 1| \leq \epsilon
        \end{align*}
        \item $k_{2,1},\hdots,k_{2,n} \in \R^d$ are all set to the all-ones vector $\mathbf{1}_d$.
        \item $v_{1,1},\hdots,v_{1,n} \in \R^d$ contain the rows of $x \in \{0,1\}^{n\times d}$, i.e., $v_{1,t} = x_t$ for all $t \in [n]$.
        \item $v_{2,1},\hdots,v_{2,n} \in \R^d$ are all set to the all-ones vector $\mathbf{1}_d$.
    \end{itemize}
    
    After inserting these $n$ quintuples into $\mathcal{A}$, Alice sends $\mathcal{A}$'s memory state of $S$ bits to Bob.

    \paragraph{Bob} Recall that Bob's input is an index $(i,j)$ into $x$. 
    
    Bob enters a single quintuple $(q,k_{1,n+1},k_{2,n+1},v_{1,n+1},v_{2,n+1})$ into $\mathcal{A}$, where:
    \begin{align*}
        q &:= C\cdot k_{1,i} \in \R^d \\
        k_{1,n+1} &:= \mathbf{0}_d \\
        k_{2,n+1} &:= \mathbf{0}_d \\
        v_{1,n+1} &:= \mathbf{0}_d \\
        v_{2,n+1} &:= \mathbf{0}_d
    \end{align*}
    where $C$ is a positive value to be determined later.
    
    Now, we claim that Bob can recover the value of $x_{ij}$ from the output $o_{n+1}$ that $\mathcal{A}$ produces. We analyze how the softmax distribution concentrates mass in the tensor attention calculation.
    \begin{align*}
         o_{n+1} 
         = & ~ \mathsf{Attn}(q_{n+1},K_{1,n+1},K_{2,n+1},V_{1,n+1},V_{2,n+1}) \\
         = & ~ (V_{1,n+1} \oslash V_{2,n+1})^\top \cdot \sigma_i((K_{1,n+1} \oslash K_{2,n+1})\cdot q_{n+1} )
    \end{align*}
    Let $s := (K_{1,n+1} \oslash K_{2,n+1}) \cdot q \in \R^{(n+1)^2}$. For each pair $(a, b) \in [n+1] \times [n+1]$ with flattened index $\ell = (a-1)(n+1) + b$, we have \begin{align}\label{eq:app_s_ell}
        s_\ell 
        = & ~ (k_{1,a} \odot k_{2,b})^\top q \notag \\
        = & ~ C \cdot (k_{1,a} \odot k_{2,b})^\top \cdot (k_{1,i})
    \end{align}
    where the first step follows from Definition~\ref{def:tensor_oslash}, and the second step follows from the definition of $q = C\cdot k_{1,i}$

    Then, we have
    \begin{align}\label{eq:app_c_epsilon}
        s_\ell 
        = & ~ s_{(a-1)(n+1) + b}  \notag \\
        = & ~ C \cdot (k_{1,a} \odot k_{2,b})^\top \cdot (k_{1,i}) \notag \\
        = & ~ C \cdot \sum_{t=1}^d (k_{1,a} \odot {\bf 1}_d)_t \cdot (k_{1,i})_t \notag \\
        = & ~ \begin{cases}
        C(1-\epsilon), & \mathrm{if~} a = i \\
        C\epsilon, & \mathrm{if~} a \neq i
        \end{cases}
    \end{align}
    where the first step follows from the definition of $\ell$, the second step follows Eq.~\eqref{eq:app_s_ell}, and the third step follows from $a, b\in [n]$ and $k_{2,b} = {\bf 1}_d$, and the fourth step follows from Lemma~\ref{lem:inner-product-jl-for-all}.
    
    Let $\xi := \sigma_{(n+1)^2}(s) \in \R^{(n+1)^2}$. Then our attention becomes 

    \begin{align}\label{eq:app_v_times_xi}
        (V_{1,n+1} \oslash V_{2,n+1})^\top \cdot \xi 
        = & ~ \sum_{\ell=1}^{(n+1)^2} \xi_{\ell} \cdot (V_{1,n+1} \oslash V_{2,n+1})_{\ell} \notag \\
        = & ~ \sum_{a=1}^{n+1} \sum_{b=1}^{n+1} \xi_{(a-1)(n+1)+b} \cdot (v_{1,a} \odot v_{2,b})
    \end{align}
    
    We will focus on the $j$-th component of the vector.

    Then we have,
    \begin{align}\label{eq:app_j_th_entry}
    ((V_{1,n+1} \oslash V_{2,n+1})^\top \cdot \xi)_j 
    = & ~ \sum_{a=1}^{n+1} \sum_{b=1}^{n+1} \xi_{(a-1)(n+1)+b} \cdot (v_{1,a} \odot v_{2,b})_j \notag \\
    = & ~ \sum_{a=1}^{n} \sum_{b=1}^{n} \xi_{(a-1)(n+1)+b} \cdot (x_{a,j} \cdot 1) \notag \\
    = & ~ \sum_{a=1}^{n} \sum_{b=1}^{n} \xi_{(a-1)(n+1)+b} \cdot x_{a,j} \notag \\
    = & ~ \sum_{a=1}^{n} x_{a,j} \cdot \sum_{b=1}^{n} \xi_{(a-1)(n+1)+b} \notag \\
    = & ~ \sum_{a=1}^{n} x_{a,j} \cdot (\sum_{b=1}^{n} \frac{e^{(k_{1,a} \odot k_{2,b})^\top \cdot q}}{\sum_{a'=1}^{n}\sum_{b'=1}^{n} e^{(k_{1,a'} \odot k_{2,b'})^\top \cdot q}})
\end{align}
where the first step follows from Eq.~\eqref{eq:app_v_times_xi}, the second step follows from $v_{1,a} = x_{a,*}$ for $a \in [n]$ and $v_{2,b} = {\bf 1}_d$ for $b \in [n]$, the third step follows from basic algebra, the fourths step follows from basic algebra, the fifth step follows from Definition~\ref{def:attn_four_cache}.

By Eq.~\eqref{eq:app_c_epsilon}, we have
\begin{align}\label{eq:app_xi_cases}
    \sum_{b=1}^{n} \xi_{(a-1)(n+1)+b} = 
    \begin{cases}
        \frac{n \cdot e^{C(1-\epsilon)}}{n \cdot e^{C(1-\epsilon)} + n(n-1) \cdot e^{C\epsilon}}, & \mathrm{if~} a = i \\
        \frac{n \cdot e^{C\epsilon}}{n \cdot e^{C(1-\epsilon)} + n(n-1) \cdot e^{C\epsilon}}, & \mathrm{if~} a \neq i
    \end{cases}
\end{align}

Thus, we have
\begin{align*}
    ((V_{1,n+1} \oslash V_{2,n+1})^\top \cdot \xi)_j 
    = & ~ x_{i,j} \cdot \sum_{b=1}^{n} \xi_{(i-1)(n+1)+b} + \sum_{a \in [n] \setminus \{i\}} x_{a,j} \cdot \sum_{b=1}^{n} \xi_{(a-1)(n+1)+b} \\
    = & ~ x_{i,j} \cdot \frac{n \cdot e^{C(1-\epsilon)}}{n \cdot e^{C(1-\epsilon)} + n(n-1) \cdot e^{C\epsilon}} + \sum_{a \in [n] \setminus \{i\}} x_{a,j} \cdot \frac{n \cdot e^{C\epsilon}}{n \cdot e^{C(1-\epsilon)} + n(n-1) \cdot e^{C\epsilon}}
\end{align*}
where the first step follows from Eq.~\eqref{eq:app_j_th_entry}, and the second step follows from Eq.~\eqref{eq:app_xi_cases}. 

Now we consider two cases, $x_{i,j} = 0$ and $x_{i,j} = 1$.

{\bf Case One} ${\bf x_{i,j} = 0}$:
\begin{align*}
    ((V_{1,n+1} \oslash V_{2,n+1})^\top \cdot \xi)_j 
    = & ~ 0 \cdot \frac{n \cdot e^{C(1-\epsilon)}}{n \cdot e^{C(1-\epsilon)} + n(n-1) \cdot e^{C\epsilon}} + \sum_{a \in [n] \setminus \{i\}} x_{a,j} \cdot \frac{n \cdot e^{C\epsilon}}{n \cdot e^{C(1-\epsilon)} + n(n-1) \cdot e^{C\epsilon}} \\
    = & ~ \sum_{a \in [n] \setminus \{i\}} x_{a,j} \cdot \frac{n \cdot e^{C\epsilon}}{n \cdot e^{C(1-\epsilon)} + n(n-1) \cdot e^{C\epsilon}} \\
    \leq & ~ n \cdot \frac{n \cdot e^{C\epsilon}}{n \cdot e^{C(1-\epsilon)} + n(n-1) \cdot e^{C\epsilon}} \\
    = & ~ \frac{n^2 \cdot e^{C\epsilon}}{n \cdot e^{C(1-\epsilon)} + n(n-1) \cdot e^{C\epsilon}} := \delta
\end{align*}
where the first step follows from basic algebra, the second step follows from basic algebra, the third step follows from $\sum_{a \in [n] \setminus \{i\}} x_{a,j} \leq (n-1)$, and the fourth step follows from basic algebra.

{\bf Case Two} ${\bf x_{i,j} = 1}$:
\begin{align*}
    ((V_{1,n+1} \oslash V_{2,n+1})^\top \cdot \xi)_j 
    = & ~ 1 \cdot \frac{n \cdot e^{C(1-\epsilon)}}{n \cdot e^{C(1-\epsilon)} + n(n-1) \cdot e^{C\epsilon}} + \sum_{a \in [n] \setminus \{i\}} x_{a,j} \cdot \frac{n \cdot e^{C\epsilon}}{n \cdot e^{C(1-\epsilon)} + n(n-1) \cdot e^{C\epsilon}} \\
    \geq & ~ \frac{n \cdot e^{C(1-\epsilon)}}{n \cdot e^{C(1-\epsilon)} + n(n-1) \cdot e^{C\epsilon}} := \Delta
\end{align*}
where the first step follows from basic algebra, the second step follows from the basic lower bound definition.

For Bob to distinguish two cases, we need to compare the upper bound of case one with the lower bound of case two. 

Since both $\delta$ and $\Delta$ have the same denominator, they both have an $n$ factor in the numerator. Thus, if the final goal is to show $\delta < \Delta$, it is sufficient to show 
\begin{align*}
    n \cdot e^{C\epsilon} < e^{C(1-\epsilon)}
\end{align*}
The above equation is equivalent to 
\begin{align*}
    n < e^{C(1-2\epsilon)}
\end{align*}
which is further equivalent to 
\begin{align}\label{eq:app_c_inequality}
     C &> \frac{\ln n}{1-2\epsilon}
\end{align}

When $n \geq 3$ and $\epsilon \in (0,0.1)$, we choose $C = 2 \ln n$, which we automatically satisfy Eq.~\eqref{eq:app_c_inequality}, allowing Bob to correctly determine the $j$-th component of the attention output.

Then we conclude the overall bits communicated is equal to the memory of $\mathcal{A}$, which is $\Omega(nd)$ bits.

Thus, we complete the proof.
\end{proof}

\subsection{Memory Lower Bound for Two Cache Matrices Case}\label{sec:app_mem_lb:mem_lb_two}

The goal of this section is to prove Theorem~\ref{thm:app_tensor_lower_bound_two_matrices}.

\begin{theorem}[Memory Lower Bound for Two Cache Matrices case, Formal Version of Theorem~\ref{thm:tensor_lower_bound_two_matrices}]\label{thm:app_tensor_lower_bound_two_matrices}
    If the following conditions hold:
    \begin{itemize}
        \item Let $C_u > 1$ be a universal constant.
        \item Let $d \geq C_u\log n$.
        \item Let the output of any algorithm produces $o_n \in \R^d$ where $o_n = \mathsf{Attn}(q_n,\wt{K}_n,\wt{V}_n)$, where $\wt{K}_n = K_{1,n} \oslash K_{2,n}$ and $\wt{V}_n = V_{1,n} \oslash V_{2,n}$.
    \end{itemize}
    Then we have:
    \begin{itemize}
        \item Any algorithm producing $o_n$ with probability at least $\frac{9}{10}$ must use at least $\Omega(n^2d)$ bits of memory.
    \end{itemize}
\end{theorem}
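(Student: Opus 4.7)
The plan is to mirror the communication-complexity reduction used in Theorem~\ref{thm:app_tensor_lower_bound_four_matrices}, but to exploit the expanded effective sequence length $n^2$ induced by the Kronecker product in order to boost the memory lower bound from $\Omega(nd)$ to $\Omega(n^2 d)$. I would again reduce from \textsc{Index}, but now have Alice hold a bit string $x \in \{0,1\}^{n^2 \times d}$ and Bob hold an index $(i, j) \in [n^2] \times [d]$. By Lemma~\ref{lem:index_lb}, recovering $x_{i,j}$ with constant probability requires Alice to send at least $\Omega(n^2 d)$ bits to Bob. Since the only bits Alice transmits are the memory tape of the hypothetical algorithm $\mathcal{A}$, this directly lower bounds the memory $S$ of $\mathcal{A}$ by $\Omega(n^2 d)$.

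On Alice's side, she must fabricate $K_{1,n}, K_{2,n}, V_{1,n}, V_{2,n}$ so that the two derived matrices $\wt{K}_n = K_{1,n} \oslash K_{2,n}$ and $\wt{V}_n = V_{1,n} \oslash V_{2,n}$ behave as if the algorithm were processing $n^2$ independent tokens. Concretely, for each pair $(a, b) \in [n] \times [n]$ identified with the flat index $\ell = (a-1)n + b$, the row $(\wt{K}_n)_\ell = k_{1,a} \odot k_{2,b}$ should play the role of the $\ell$-th approximately orthonormal key and $(\wt{V}_n)_\ell = v_{1,a} \odot v_{2,b}$ should encode the $\ell$-th row of $x$. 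For the keys, I would invoke Lemma~\ref{lem:inner-product-jl-for-all} at scale $n^2$: since $d \geq C_u \log n \geq \tfrac{C_u}{2}\log(n^2)$, JL projection produces $n^2$ vectors whose pairwise inner products are all within $\epsilon$ of orthonormal, and these can be factored into Hadamard products of $n$-tuples $\{k_{1,a}\}, \{k_{2,b}\}$ that play the roles of the underlying cache matrices.

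On Bob's side, he appends a single quintuple whose effect is to inject a query $q_{n+1} = C \cdot \wt{k}_i$ where $\wt{k}_i$ is the $i$-th row of $\wt{K}_n$. Mirroring the argument of Theorem~\ref{thm:app_tensor_lower_bound_four_matrices}, the logits $s_\ell = \wt{k}_\ell^\top q_{n+1}$ then equal $C(1 \pm \epsilon)$ when $\ell = i$ and $\pm C\epsilon$ otherwise; choosing $C = \Theta(\log n)$ with $C > \ln(n^2)/(1-2\epsilon)$ (e.g.\ $C = 4\ln n$) forces the softmax mass to concentrate on position $i$, so that the $j$-th coordinate of the attention output separates into the two cases $x_{i,j} = 0$ and $x_{i,j} = 1$ by a gap $\Delta - \delta > 0$ that Bob can decode. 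The two-case comparison and the derivation of the threshold on $C$ is a direct transcription of Eq.~\eqref{eq:app_c_inequality} with $n$ replaced by $n^2$ in the union bound over the $n^2 - 1$ off-target logits.

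The main obstacle is the upgrade from $n$ to $n^2$ in the orthonormality step: one has to argue that Hadamard products of JL-projected basis vectors can serve as $n^2$ approximately orthonormal directions in $\R^d$ with $d = \Omega(\log n)$, rather than appealing to JL for $n^2$ unrelated vectors. Pragmatically, I would set this up by picking JL projections of the $n^2$ standard basis vectors of $\R^{n^2}$ and then factoring them (for instance, encoding the pair $(a,b)$ into disjoint coordinate blocks so that the Hadamard product realises the intended projection); alternatively, one can view Lemma~\ref{lem:inner-product-jl-for-all} directly on the $n^2$ already-formed rows of $\wt{K}_n$, treating $\wt{K}_n$ as an arbitrary $n^2 \times d$ matrix produced via the Kronecker factorisation. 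Once that orthonormality is in hand, the remainder of the proof, including the softmax concentration analysis, the $\{0,1\}$ case split, and the conclusion $S \geq \Omega(n^2 d)$, proceeds line by line as in the four-cache case with $n$ replaced by $n^2$.
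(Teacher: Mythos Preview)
Your high-level plan (reduce from \textsc{Index} on $n^2 d$ bits, use JL to get $n^2$ near-orthonormal keys, set $C = 4\ln n$, and run the same softmax-concentration case analysis) matches the paper's proof. However, your primary route---fabricating $K_{1,n},K_{2,n},V_{1,n},V_{2,n}$ so that the Kronecker products $\wt{K}_n = K_{1,n}\oslash K_{2,n}$ and $\wt{V}_n = V_{1,n}\oslash V_{2,n}$ carry the desired data---has a genuine obstruction. For the values, you would need $(v_{1,a})_j\,(v_{2,b})_j = x_{(a-1)n+b,\,j}$ for all $a,b,j$; but for each fixed $j$ the left-hand side is a rank-one outer product in $(a,b)$, whereas the $n\times n$ slice of $x$ is an \emph{arbitrary} $\{0,1\}$ matrix, so no such factorisation exists in general. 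The same rank-one structure obstructs factoring $n^2$ JL-orthonormal vectors as Hadamard products $k_{1,a}\odot k_{2,b}$; your suggestion of ``disjoint coordinate blocks'' does not repair this, since Hadamard products of block-supported vectors are zero off the intersection of supports.

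The paper simply does not attempt this factorisation. In the two-cache model it treats the stream as $n^2$ triples $\{(q_t,\wt{k}_t,\wt{v}_t)\}_{t=1}^{n^2}$ inserted directly: Alice sets $\wt{v}_t$ to the $t$-th row of $x$ and constructs the $\wt{k}_t$ via Lemma~\ref{lem:inner-product-jl-for-all} applied at scale $n^2$ (which is legitimate since $d\ge C_u\log n=\tfrac{C_u}{2}\log(n^2)$), with no attempt to realise them as $\oslash$-products. Bob then appends one triple $(q,\wt{k}_{n^2+1},\wt{v}_{n^2+1})=(C\wt{k}_i,\mathbf{0}_d,\mathbf{0}_d)$. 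This is precisely the ``alternative'' you sketch at the end; you should commit to it and drop the factorisation attempt entirely. Once you do, the $\delta$ vs.\ $\Delta$ comparison and the choice $C>2\ln n/(1-2\epsilon)$ go through verbatim with $n$ replaced by $n^2$, exactly as you indicate.
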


\begin{proof}
    We construct a reduction from the \textsc{Index} communication problem. In this context, Alice holds a bit string $x \in \{0,1\}^{n^2 \times d}$ (which can be viewed as an $n^2 \times d$ matrix), and Bob holds an index pair $(i,j) \in [n^2] \times [d]$. Alice sends a single message to Bob, whose goal is to output $x_{i,j}$ with probability at least $2/3$. The communication complexity of this problem is $\Omega(n^2d)$.

    Let $\mathcal{A}$ be a streaming algorithm for calculating the tensor attention function using $S$ bits of memory. We will show that $S = \Omega(n^2d)$ by constructing a protocol for the \textsc{Index} problem using $\mathcal{A}$.

    {\bf Alice:}
    Alice inserts the following $n^2$ triples $\{(q_t, \wt{k}_t, \wt{v}_t)\}_{t=1}^{n^2}$ into the streaming algorithm $\mathcal{A}$:

    \begin{itemize}
        \item $q_1, \hdots, q_{n^2}$ are all the zero vector in $\R^d$.
        \item $\wt{k}_1, \hdots, \wt{k}_{n^2} \in \R^d$ are calculated before the protocol starts using the JL lemma to produce vectors such that with probability at least $1 - \frac{1}{n^2}$ it holds for all $i \neq j$ that:
        \begin{align*}
        |\wt{k}_i^T \wt{k}_j| \leq \epsilon
        \end{align*}
        and for all $i \in [n^2]$ that:
        \begin{align*}
        |\wt{k}_i^T \wt{k}_i - 1| \leq \epsilon
        \end{align*}
        \item $\wt{v}_1, \hdots, \wt{v}_{n^2} \in \R^d$ contain the rows of $x \in \{0,1\}^{n^2 \times d}$. In other words:
        \begin{align*}
            \wt{V}_{n^2} := x
        \end{align*}
    \end{itemize}

    After inserting these $n^2$ triples into $\mathcal{A}$, Alice sends its memory state ($S$ bits) to Bob.

    {\bf Bob:} With the index $(i,j) \in [n^2] \times [d]$, Bob enters a single triple $(q, \wt{k}_{n^2+1}, \wt{v}_{n^2+1})$ into $\mathcal{A}$:
    \begin{align*}
        q &:= C \cdot \wt{k}_i \\
        \wt{k}_{n^2+1} &:= {\bf 0}_d \\
        \wt{v}_{n^2+1} &:= {\bf 0}_d
    \end{align*}
    where $C$ is a positive value to be determined later.

    Now, Bob can recover the value of $x_{i,j}$ from the output $o_{n^2+1}$ that $\mathcal{A}$ produces:
    \begin{align*}
        o_{n^2+1} = \mathsf{Attn}(q, \wt{K}_{n^2+1}, \wt{V}_{n^2+1}) = \wt{V}_{n^2+1}^\top \cdot \sigma_{n^2+1}(\wt{K}_{n^2+1} \cdot q)
    \end{align*}

    Starting with $s := \wt{K}_{n^2+1} \cdot q \in \R^{n^2+1}$, we know with probability at least $1 - \frac{1}{n^2}$ that:
    \begin{align*}
    s_\ell \leq C\epsilon \mathrm{~for~} \ell \neq i \mathrm{~and~} s_i \geq C(1-\epsilon)
    \end{align*}

    Let $\xi := \sigma_{n^2+1}(s) \in \R^{n^2+1}$. The softmax vector $\xi$ ``spikes'' at index $i$, which allows Bob to extract $x_{i,j}$ via the value matrix. The $j$-th entry is as follows with $\wt{v}_{n^2+1} = {\bf 0}_d$..
    \begin{align*}
        (\wt{V}_{n^2+1}^\top \cdot \xi)_j = \sum_{\ell=1}^{n^2+1} \xi_\ell \cdot (\widetilde{v}_\ell)_j = \sum_{\ell=1}^{n^2} \xi_\ell \cdot x_{\ell,j}
    \end{align*}

    Then, we have two cases: $x_{i,j} = 0$ and $x_{i,j} = 1$.

    {\bf Case One} ${x_{i,j} = 0}$:
    \begin{align*}
        \sum_{\ell=1}^{n^2} \xi_\ell \cdot x_{\ell,j} 
        = & ~ 0 \cdot \xi_i + \sum_{\ell \in [n^2] \setminus\{i\}} \xi_\ell \cdot x_{\ell,j} \\
        = & ~ \sum_{\ell \in [n^2] \setminus\{i\}} \xi_\ell \cdot x_{\ell,j} \\
        \leq & ~ \sum_{\ell \in [n^2] \setminus\{i\}} \xi_\ell 
    \end{align*}
    where the first step follows from substitution, the second step follows from basic algebra, and the third step follows from $x_{\ell,j} \leq 1$.

    Then, we can further show that 
    \begin{align*}
        \sum_{\ell \in [n^2] \setminus\{i\}} \xi_\ell = & ~ 1 - \xi_i \\
        = & ~ 1 - \frac{e^{s_i}}{\sum_{\ell=1}^{n^2} e^{s_\ell}} \\
        = & ~ \frac{\sum_{\ell=1}^{n^2} e^{s_\ell} - e^{s_i}}{\sum_{\ell=1}^{n^2} e^{s_\ell}} \\
        = & ~ \frac{\sum_{\ell \in [n^2] \setminus\{i\}} e^{s_\ell}}{\sum_{\ell=1}^{n^2} e^{s_\ell}}
    \end{align*}
    where the first step follows from basic algebra, the second step follows from the definition of $\xi_i$, the third step follows from basic algebra, and the fourth step follows from basic summation rules.
    
    Given that $s_\ell \leq C\epsilon$ for $\ell \neq i$ and $s_i \geq C(1-\epsilon)$, we can bound this as:
    \begin{align*}
        \frac{\sum_{\ell \in [n^2] \setminus\{i\}} e^{s_\ell}}{\sum_{\ell=1}^{n^2} e^{s_\ell}} 
        \leq \frac{n^2 \cdot e^{C\epsilon}}{e^{C(1-\epsilon)} + n^2 \cdot e^{C\epsilon}} := \delta
    \end{align*}

    {\bf Case Two} ${x_{i,j} = 1}$:
    By substituting $ x_{i,j} = 1$, we can have
    \begin{align*}
        \sum_{\ell=1}^{n^2} \xi_\ell \cdot x_{\ell,j} 
        = & ~ \xi_i + \sum_{\ell \in [n^2] \setminus\{i\}} \xi_\ell \cdot x_{\ell,j}
    \end{align*}
    
    Since $\xi_i = \frac{e^{s_i}}{\sum_{\ell=1}^{n^2} e^{s_\ell}}$, we can establish a lower bound:
    \begin{align*}
        \sum_{\ell=1}^{n^2} \xi_\ell \cdot x_{\ell,j} \geq & ~ \xi_i \\
        = & ~ \frac{e^{s_i}}{\sum_{\ell=1}^{n^2} e^{s_\ell}} \\
        \geq & ~ \frac{e^{C(1-\epsilon)}}{e^{C(1-\epsilon)} + n^2 \cdot e^{C\epsilon}} := \Delta
    \end{align*}
    where the first step follows from basic inequality, the second step follows from the definition of $\xi_i$, the third step follows from $s_i \geq C(1-\epsilon)$. 

    For Bob to distinguish two cases, we need to compare the upper bound of case one with the lower bound of case two. 

    Since both $\delta$ and $\Delta$ have the same denominator, if the final goal is to show $\delta < \Delta$, it is sufficient to show 
    \begin{align*}
        n^2 \cdot e^{C\epsilon} < e^{C(1-\epsilon)}
    \end{align*}
    The above equation is equivalent to 
    \begin{align*}
        n^2 < e^{C(1-2\epsilon)}
    \end{align*}
    which is further equivalent to 
    \begin{align}\label{eq:app_c_inequality_square}
         C &> \frac{2\ln n}{1-2\epsilon}
    \end{align}

    When $n \geq 3$ and $\epsilon \in (0,0.1)$, we choose $C = 4 \ln n$, which we automatically satisfy Eq.~\eqref{eq:app_c_inequality_square}, allowing Bob to determine the $j$-th component of the attention output correctly.
    
    Then we conclude the overall bits communicated is equal to the memory of $\mathcal{A}$, which is $\Omega(n^2d)$ bits.
    
    Thus, we complete the proof.
\end{proof}

This theorem demonstrates that precomputing the tensor structure in two cache matrices incurs a memory cost of $\Omega(n^2 d)$, a quadratic increase over the four-matrix case, highlighting the trade-off between preprocessing and space complexity in attention-based computations.

\subsection{Lower Bound on Approximation Algorithms for Four Cache Matrices Case}\label{sec:app_mem_lb:approx_alg_four}
We now extend the result to the approximate computation of the attention function with four cache matrices.
Using the similar idea as Theorem 8 in \cite{ho25}, we can show the following result.
\begin{corollary}[Approximation Algorithm Lower bound for Four Cache Matrices, Formal Version of Corollary~\ref{thm:lower_bound_appx_four}]\label{thm:app_tensor_approx_lower_bound_four_matrices}

If the following conditions hold:
\begin{itemize}
    \item Let $Z_n := \mathsf{Attn}(q_n, K_{1,n}, K_{2,n}, V_{1,n}, V_{2,n})$ as in Definition~\ref{def:attn_four_cache}.
    \item Let $d = \Omega(\log n)$.
\end{itemize}
Then we have:
\begin{itemize}
    \item Any algorithm that can, with probability at least $9/10$, produce an output $\mathcal{O} \in \R^d$ that is a $(1\pm\eta)$-approximation of $Z_n$ for $\eta \in (0,1)$ must use at least $\Omega(nd)$ bits of memory.
\end{itemize}
\end{corollary}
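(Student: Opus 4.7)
The plan is to reuse the \textsc{Index} reduction from Theorem~\ref{thm:app_tensor_lower_bound_four_matrices} almost verbatim, and only sharpen the distinguishability argument to tolerate the multiplicative approximation error. Concretely, Alice would still embed her string $x \in \{0,1\}^{n \times d}$ into the first value slot by setting $v_{1,t} = x_t$, set $v_{2,t} = k_{2,t} = \mathbf{1}_d$ and $q_1 = \cdots = q_n = \mathbf{0}_d$, and construct $k_{1,1},\ldots,k_{1,n}$ via the JL embedding of Lemma~\ref{lem:inner-product-jl-for-all} so that the keys are $\epsilon$-orthonormal with probability at least $1 - 1/\poly(n)$. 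She then feeds the resulting $n$ quintuples into the approximation algorithm $\mathcal{A}$ and transmits its $S$-bit memory tape to Bob. Bob appends the single quintuple $(q, k_{1,n+1}, k_{2,n+1}, v_{1,n+1}, v_{2,n+1}) = (C k_{1,i}, \mathbf{0}_d, \mathbf{0}_d, \mathbf{0}_d, \mathbf{0}_d)$ and reads off the $j$-th coordinate of $\mathcal{A}$'s output.

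The calculation leading to Eq.~\eqref{eq:app_c_epsilon} then shows, exactly as before, that the $j$-th coordinate of the true tensor attention output $(Z_n)_j$ satisfies $(Z_n)_j \leq \delta$ when $x_{i,j} = 0$ and $(Z_n)_j \geq \Delta$ when $x_{i,j} = 1$, where $\delta$ and $\Delta$ are the closed-form expressions from the proof of Theorem~\ref{thm:app_tensor_lower_bound_four_matrices}. Under a coordinate-wise $(1 \pm \eta)$-approximation guarantee, Bob's observed value $\mathcal{O}_j$ lies in $[(1-\eta)(Z_n)_j,\; (1+\eta)(Z_n)_j]$, so the condition for Bob to distinguish the two cases becomes
\begin{align*}
(1+\eta)\,\delta \;<\; (1-\eta)\,\Delta.
\end{align*}
Cancelling the common denominator $n \cdot e^{C(1-\epsilon)} + n(n-1) \cdot e^{C\epsilon}$, this reduces to $(1+\eta) n \cdot e^{C\epsilon} < (1-\eta) \cdot e^{C(1-\epsilon)}$, i.e.
\begin{align*}
C \;>\; \frac{1}{1-2\epsilon}\,\log\!\Bigl(\tfrac{(1+\eta)n}{1-\eta}\Bigr).
\end{align*}
For any fixed $\eta \in (0,1)$ and $\epsilon \in (0, 0.1)$, taking $C$ to be a sufficiently large constant multiple of $\log n$ (say $C = 4\log n$ for $\eta$ bounded away from $1$) satisfies this inequality, so Bob correctly recovers $x_{i,j}$.

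To close the reduction, I would union-bound the $\ge 9/10$ success probability of $\mathcal{A}$ with the $1/\poly(n)$ failure probability of the JL construction, which leaves overall success probability at least $2/3$ on the \textsc{Index} instance for all sufficiently large $n$. Lemma~\ref{lem:index_lb} then forces $S = \Omega(nd)$. The only delicate step I anticipate is confirming that the multiplicative-error version of the approximation guarantee is the one intended (i.e.\ coordinate-wise $(1 \pm \eta)$), since a purely $\ell_2$-style relative error would require an additional short argument bounding $\|Z_n\|_2$ to translate the guarantee into a usable per-coordinate bound; in either formulation, however, the gap $\Delta - \delta$ remains a constant fraction of $\Delta$ once $C = \Theta(\log n)$ is chosen large enough, so the same $\Omega(nd)$ conclusion goes through.
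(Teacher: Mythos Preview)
Your proposal is correct and follows essentially the same approach as the paper, which simply states that the result follows ``using the similar idea as Theorem~8 in~\cite{ho25}'' and provides no further details. Your write-up is in fact more explicit than the paper's: you spell out the adjusted distinguishability condition $(1+\eta)\delta < (1-\eta)\Delta$ and the corresponding choice of $C$, and you flag the coordinate-wise versus $\ell_2$ interpretation of the approximation guarantee, which the paper leaves unaddressed.
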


\subsection{Lower Bound on Approximation Algorithms for Two Cache Matrices Case}\label{sec:app_mem_lb:approx_alg_two}
Next, we consider the approximate computation of the attention function with two cache matrices. Using the similar idea as Theorem 8 in \cite{ho25}, we can show the following result.
\begin{corollary}[Approximation Algorithm Lower bound for Two Cache Matrices, Formal Version of Corollary~\ref{thm:lower_bound_appx_two}\label{thm:app_tensor_approx_lower_bound_two_matrices}]
If the following conditions hold:
\begin{itemize}
    \item Let $Z_n := \mathsf{Attn}(q_n, \wt{K}_n, \wt{V}_n)$ as in Definition~\ref{def:attn_two_cache}.
    \item Let $d = \Omega(\log n)$.
\end{itemize}
Then we have:
\begin{itemize}
    \item Any algorithm that can, with probability at least $9/10$, produce an output $\mathcal{O} \in \R^d$ that is a $(1\pm\eta)$-approximation of $Z_n$ for $\eta \in (0,1)$ must use at least $\Omega(nd)$ bits of memory.
\end{itemize}
\end{corollary}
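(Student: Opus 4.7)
The plan is to reuse the reduction from \textsc{Index} that proves the exact lower bound in Theorem~\ref{thm:app_tensor_lower_bound_two_matrices}, and only modify the scaling constant $C$ so that the two decision cases remain separated even after a multiplicative $(1\pm\eta)$ distortion. As in the exact case, Alice encodes an $n^2 \times d$ bit-matrix $x$ into $\wt{V}_{n^2}$ and builds keys $\wt{k}_1,\ldots,\wt{k}_{n^2}\in \R^d$ via a Johnson--Lindenstrauss projection (Lemma~\ref{lem:inner-product-jl-for-all}) so that they are approximately orthonormal with probability at least $1-1/n^2$. Bob then streams the single triple $(q,\wt{k}_{n^2+1},\wt{v}_{n^2+1})=(C\wt{k}_i,\mathbf{0}_d,\mathbf{0}_d)$, runs the supposed approximation algorithm $\mathcal{A}$, and reads off the $j$-th coordinate of the output $\mathcal{O}$.

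Next, I would import the softmax concentration estimates from the proof of Theorem~\ref{thm:app_tensor_lower_bound_two_matrices} verbatim. They give, for the $j$-th entry of the true output $Z_n$, the bounds
\begin{align*}
(Z_n)_j \leq \delta := \frac{n^2\, e^{C\epsilon}}{e^{C(1-\epsilon)}+n^2\,e^{C\epsilon}} \quad \textnormal{if } x_{i,j}=0,\qquad
(Z_n)_j \geq \Delta := \frac{e^{C(1-\epsilon)}}{e^{C(1-\epsilon)}+n^2\,e^{C\epsilon}} \quad \textnormal{if } x_{i,j}=1.
\end{align*}
For the exact proof it sufficed to enforce $\delta<\Delta$. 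Here, since $\mathcal{O}_j \in [(1-\eta)(Z_n)_j,(1+\eta)(Z_n)_j]$, Bob can still threshold provided the stronger separation $(1+\eta)\delta < (1-\eta)\Delta$ holds. Canceling the common denominator, this reduces to $C(1-2\epsilon) > \ln\!\bigl(n^2\cdot\tfrac{1+\eta}{1-\eta}\bigr)$, which is satisfied (for any fixed $\eta\in(0,1)$ and $\epsilon\in(0,0.1)$) by taking $C = \Theta(\log n + \log\tfrac{1}{1-\eta})$. Since $C$ only appears inside Bob's query, the construction still feeds $\Omega(n^2)$ triples, each carrying $d$ bits of $x$, into $\mathcal{A}$.

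Combining the JL success event, the $9/10$ correctness of $\mathcal{A}$, and a union bound (absorbing the resulting constants into $C_u$ in the hypothesis $d\geq C_u\log n$), Bob recovers $x_{i,j}$ with probability at least $2/3$ by comparing $\mathcal{O}_j$ against the threshold $\tfrac{(1+\eta)\delta+(1-\eta)\Delta}{2}$. Since the message Alice transmits is exactly $\mathcal{A}$'s memory tape of $S$ bits, Lemma~\ref{lem:index_lb} applied to \textsc{Index} on $n^2 d$ bits yields $S = \Omega(n^2 d)$, matching the exact lower bound of Theorem~\ref{thm:app_tensor_lower_bound_two_matrices}.

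The main obstacle is the multiplicative slack: I have to ensure that the gap $\Delta-\delta$ dominates the worst-case fractional error $\eta(\Delta+\delta)$ \emph{coordinate-wise}, rather than merely in aggregate. This is exactly what forces the logarithmic boost of $C$ from $4\ln n$ (the exact-case choice) to $\Theta(\log n + \log\tfrac{1}{1-\eta})$; once this sharpening constant is fixed, the remainder of the argument—the JL preservation of inner products, the softmax concentration, and the communication reduction—carries over unchanged from the exact proof.
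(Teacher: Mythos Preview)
Your proposal is correct and follows exactly the approach the paper indicates (the paper gives no detailed argument, merely citing ``the similar idea as Theorem~8 in~\cite{ho25}''): reuse the \textsc{Index} reduction from Theorem~\ref{thm:app_tensor_lower_bound_two_matrices} and enlarge the scaling constant $C$ so that the gap $\Delta-\delta$ survives the multiplicative $(1\pm\eta)$ distortion. Note that you actually obtain $\Omega(n^2 d)$, stronger than the $\Omega(nd)$ printed in this formal statement and matching both the informal Corollary~\ref{thm:lower_bound_appx_two} and the exact-case Theorem~\ref{thm:app_tensor_lower_bound_two_matrices}; the $\Omega(nd)$ here is evidently a typo in the paper.
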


\subsection{Lower Bound in the Low-Dimensional Regime for Four Cache Matrices Case}\label{sec:app_mem_lb:lower_bound_low_dim_four}

Using the similar idea as Corollary 9 in \cite{ho25}, we can provide the following lower bound with two cache matrices when $d = o(\log n)$.

\begin{corollary}\label{cor:app_lower_bound_low_dim_four}
Let $Z_n := \mathsf{Attn}(q_n, K_{1,n}, K_{2,n}, V_{1,n}, V_{2,n})$ as in Definition~\ref{def:attn_four_cache}, and $d \geq 2$ with $d = o(\log n)$. Any algorithm that can, with probability at least $9/10$, produce a $(1\pm\eta)$-approximation $\mathcal{O} \in \R^d$ of $Z_n$ for $\eta \in (0,1)$ must use at least $\Omega(e^d \cdot d)$ bits of memory.
\end{corollary}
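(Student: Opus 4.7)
The plan is to adapt the reduction from the \textsc{Index} communication problem used in Theorem~\ref{thm:app_tensor_lower_bound_four_matrices}, but to replace the Johnson--Lindenstrauss construction (which needs $d = \Omega(\log n)$ to produce $n$ nearly-orthogonal vectors) with a deterministic packing of the unit sphere in $\R^d$. When $d = o(\log n)$, JL can no longer supply $n$ almost-orthogonal unit vectors, but a sphere-packing argument still yields $N = \Omega(e^d)$ unit vectors $u_1,\ldots,u_N \in \R^d$ whose pairwise inner products are bounded by a small constant $\epsilon \in (0,0.1)$. This packing is a standard consequence of the covering-number estimate in Lemma~\ref{lem:covering_number}: any maximal $\epsilon$-separated subset of $B_2(1)$ is simultaneously an $\epsilon$-net, so its cardinality is at least $(c/\epsilon)^d$ for an absolute constant $c > 0$, which is $\Omega(e^d)$ for a sufficiently small constant $\epsilon$.

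Having fixed such a packing, I would reduce from \textsc{Index} on the bit string $x \in \{0,1\}^{N \times d}$. Alice inserts $N$ quintuples $(q_t, k_{1,t}, k_{2,t}, v_{1,t}, v_{2,t})$ with $q_t = \mathbf{0}_d$, $k_{1,t} = u_t$, $k_{2,t} = v_{2,t} = \mathbf{1}_d$, and $v_{1,t} = x_t$ (the $t$-th row of $x$), and then forwards the $S$-bit memory state of $\mathcal{A}$ to Bob. Bob augments the stream with the single query quintuple $q = C \cdot u_i$, $k_{1,N+1} = k_{2,N+1} = v_{1,N+1} = v_{2,N+1} = \mathbf{0}_d$, exactly as in Theorem~\ref{thm:app_tensor_lower_bound_four_matrices}. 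The same softmax-concentration analysis shows that with $C = \Theta(\log N) = \Theta(d)$ the $j$-th coordinate of the output lies in $[0,\delta]$ when $x_{i,j}=0$ and in $[\Delta,1]$ when $x_{i,j}=1$, where $\delta/\Delta$ can be driven to $0$ by enlarging $C$. Hence a $(1\pm\eta)$-multiplicative approximation with $\eta \in (0,1)$ still distinguishes the two cases, provided $C$ is chosen large enough that $(1+\eta)\delta < (1-\eta)\Delta$, which is achievable with $C$ still polynomial in $d$.

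Applying Lemma~\ref{lem:index_lb} to the $Nd$-bit \textsc{Index} instance yields $S = \Omega(Nd) = \Omega(e^d \cdot d)$, which is the claim. The bulk of the argument is inherited from the high-dimensional theorem, so the only genuinely new ingredient is the existence of a large packing in $\R^d$ and the verification that the softmax analysis still goes through when the ambient collection of key vectors is a packing rather than a JL image.

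The main obstacle is pinning down the constants in the packing so that the exponential base matches the target $e^d$ up to the $\Omega(\cdot)$ and so that the inner-product bound $\epsilon$ is small enough for the softmax separation, with $C$ simultaneously small enough that the approximation tolerance $\eta$ is absorbed. This is essentially a bookkeeping issue: the covering-number bound is coarse, and one exploits the slack in the exponent of $e^d$ (and in the freedom to choose $C$) to make all of the inequalities in the reduction close up in the same regime.
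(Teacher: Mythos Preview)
Your overall strategy matches what the paper inherits from \cite{ho25} (the paper gives no proof of its own here, only the citation): replace the JL construction by a sphere packing that supplies $N$ key vectors in $\R^d$, then rerun the \textsc{Index} reduction of Theorem~\ref{thm:app_tensor_lower_bound_four_matrices} on an $N\times d$ bit string. The softmax-concentration and $(1\pm\eta)$-approximation parts carry over just as you describe.

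There is, however, a real gap in the packing step. You claim that a maximal $\epsilon$-separated subset of $B_2(1)$ yields $\Omega(e^d)$ unit vectors with pairwise inner products bounded by some $\epsilon\in(0,0.1)$. This conflates the separation parameter with the inner-product bound: unit vectors at Euclidean distance $\ge \epsilon$ have inner product $\le 1-\epsilon^2/2$, which for \emph{small} $\epsilon$ is close to $1$, not close to $0$. One cannot place $e^d$ unit vectors in $\R^d$ with all pairwise inner products below $0.1$; the optimality of JL caps any such family at $e^{O(\epsilon^2 d)}$ vectors, far short of $e^d$. (Separately, Lemma~\ref{lem:covering_number} is an \emph{upper} bound on the covering number and so does not supply the packing lower bound you invoke; that needs a direct volume argument.)

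The repair is to accept a large inner-product bound. Take a $\delta$-packing of the unit sphere for a fixed small constant $\delta$; a volume comparison gives $N=\Omega((1/\delta)^{d})$ points, so one may choose $\delta$ so that $N=\Omega(e^d)$. These unit vectors then satisfy $\langle u_i,u_j\rangle \le 1-\delta^2/2 =: \epsilon'$, a constant strictly below $1$ but not small. Because the $u_i$ are exact unit vectors (so $s_i=C$ exactly), the softmax separation only needs $C(1-\epsilon') > \log N$, which is still $C=\Theta(d)$ for any fixed $\epsilon'<1$; the gap $\Delta/\delta$ can be driven arbitrarily high, absorbing the $(1\pm\eta)$ slack. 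With this adjustment your reduction closes and delivers $\Omega(Nd)=\Omega(e^d\cdot d)$.
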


This corollary establishes a fundamental limit on memory efficiency in the low-dimensional regime, showing an exponential dependence on $d$. It suggests that sublinear space complexity in $n$ may be achievable, prompting exploration of algorithms tailored to this setting.

\subsection{Lower Bound in the Low-Dimensional Regime for Two Cache Matrices Case}\label{sec:app_mem_lb:lower_bound_low_dim_two}
Using the similar idea as Corollary 9 in \cite{ho25}, we can provide the following lower bound with two cache matrices when $d = o(\log n)$.

\begin{corollary}\label{cor:app_lower_bound_low_dim_two}
Let $Z_n := \mathsf{Attn}(q_n, \wt{K}_n, \wt{V}_n)$ as in Definition~\ref{def:attn_two_cache}, and $d \geq 2$ with $d = o(\log n)$. Any algorithm that can, with probability at least $9/10$, produce a $(1\pm\eta)$-approximation $\mathcal{O} \in \R^d$ of $Z_n$ for $\eta \in (0,1)$ must use at least $\Omega(e^d \cdot d)$ bits of memory.
\end{corollary}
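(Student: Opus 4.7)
The plan is to adapt the reduction from the \textsc{Index} communication problem in exactly the same way as for Corollary~\ref{cor:app_lower_bound_low_dim_four}, exploiting the fact that the two-cache formulation gives an effective cache of $n^2$ rows (each in $\R^d$) rather than $n$ rows. Since $d = o(\log n)$ implies $e^d = o(n) \le n^2$, the extra room in the key/value matrices is harmless: we only need $e^d$ of the $n^2$ slots to carry information, and we can pad the remainder with zero triples.

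First I would replace the JL-based construction used in Theorem~\ref{thm:app_tensor_lower_bound_two_matrices} with a packing/covering argument based on Lemma~\ref{lem:covering_number}. The lemma gives covering number $(3/\delta)^d$ of the unit sphere, from which a standard volume/packing argument produces a set of $M = \Omega(e^d)$ unit vectors $\wt{k}_1,\ldots,\wt{k}_M \in \R^d$ with pairwise inner products bounded by some constant $c < 1$ (e.g.\ $c = 1/2$). These will play the role that the JL-projected basis vectors played in the $d = \Omega(\log n)$ proof.

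Next I would set up Alice's encoding: she holds $x \in \{0,1\}^{M \times d}$ and streams $n^2$ triples $(q_t,\wt{k}_t,\wt{v}_t)$ into the supposed algorithm $\mathcal{A}$. For $t \le M$, she sets $q_t = \mathbf{0}_d$, uses $\wt{k}_t$ from the packing, and sets $\wt{v}_t = x_{t,\cdot}$; for $M < t \le n^2$, she sets all three entries to $\mathbf{0}_d$. She then transmits the $S$-bit memory state to Bob. Given $(i,j) \in [M] \times [d]$, Bob feeds a final triple $(q,\mathbf{0}_d,\mathbf{0}_d)$ with $q = C \cdot \wt{k}_i$. Since $\wt{k}_i^\top \wt{k}_i = 1$ and $\wt{k}_i^\top \wt{k}_\ell \le c$ for $\ell \ne i$, choosing $C = \Theta(\log n)$ makes the softmax concentrate on slot $i$ up to a $\mathrm{poly}(1/n)$ tail, exactly as in Theorem~\ref{thm:app_tensor_lower_bound_two_matrices}. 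A $(1\pm\eta)$-approximation of the $j$-th coordinate of the attention output for any $\eta \in (0,1)$ then separates the cases $x_{i,j} = 0$ and $x_{i,j} = 1$.

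Finally, since the reduction recovers $x_{i,j}$ with probability at least $9/10$ and the one-way randomized communication complexity of \textsc{Index} on $M \cdot d = \Omega(e^d \cdot d)$ bits is $\Omega(e^d \cdot d)$ by Lemma~\ref{lem:index_lb}, we conclude $S = \Omega(e^d \cdot d)$. The main technical obstacle is verifying the packing claim with the right constants so that the gap $C(1-c) - Cc$ dominates $\log(n^2) = 2\log n$; picking $c$ bounded away from $1/2$ and $C$ a sufficiently large multiple of $\log n$ makes this go through, mirroring the calculation leading to Eq.~\eqref{eq:app_c_inequality_square} in the high-dimensional proof.
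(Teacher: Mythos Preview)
Your approach is essentially the one the paper has in mind (it simply defers to Corollary~9 of \cite{ho25}): replace the JL-based near-orthogonal family by a sphere packing of size $e^{\Theta(d)}$, load the \textsc{Index} instance into those slots of the two-cache stream, pad the remaining $n^2 - M$ triples with zeros, and run the same softmax-concentration calculation as in Theorem~\ref{thm:app_tensor_lower_bound_two_matrices}. Two small corrections are worth making before you write it up. First, Lemma~\ref{lem:covering_number} is an \emph{upper} bound on the covering number and therefore bounds packings from above, not below; to produce $M=e^{\Theta(d)}$ well-separated unit vectors you need a packing \emph{lower} bound, which comes from a direct volume argument or a probabilistic construction rather than from Lemma~\ref{lem:covering_number}. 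Second, because the packing vectors are exactly unit norm, the self-score is $C\cdot 1$ and the cross-scores are at most $Cc$, so the relevant gap is $C(1-c)$, not $C(1-c)-Cc=C(1-2c)$; any constant $c<1$ therefore suffices (with $C$ chosen as a large enough multiple of $\log n$), and there is no need to keep $c$ below $1/2$. With those fixes the argument goes through and matches the paper's claimed bound at the same level of precision in the exponent.
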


\subsection{Definition of Covering Number}\label{sec:app_mem_lb:covering_number_def}
To analyze clusterability in low-dimensional spaces, we introduce the covering number, a geometric concept that quantifies the complexity of covering a set with balls of a given radius. This definition, adapted from~\cite{w17}, supports our subsequent bounds.

\begin{definition}[Covering Number,~\cite{w17}]
Define the covering number $N(\Theta, \|\cdot\|, \delta)$ of a set $\Theta$ with respect to a norm $\|\cdot\|$ and a radius $\delta$ as the minimum number of $\|\cdot\|$-balls of radius $\delta$ needed to cover $\Theta$.
\end{definition}

The covering number provides a tool to estimate the number of clusters required for a dataset, linking geometric properties to algorithmic efficiency in low-dimensional settings.

\subsection{Clusterability of \textsc{SubGen4Cache} and \textsc{SubGen2Cache} Algorithm}\label{sec:app_mem_lb:subgen_clusterability}

Similarly \cite{ho25}, we use the covering number, we establish the inherent clusterability of points in low-dimensional spaces.

\begin{lemma}[Clusterability of \textsc{SubGen4Cache} and \textsc{SubGen2Cache} Algorithms, Formal Version of Lemma~\ref{lem:subgen_clusterability}]\label{lem:app_subgen_clusterability}
If the following conditions hold:
\begin{itemize}
    \item Let $x_1,\hdots,x_n \in \R^d$ be a set of $n$ points in $d$-dimensional space such that $\|x_i\|_2 \leq 1$ for all $i \in [n]$.
\end{itemize}
Then:
\begin{itemize}
    \item This set is $(m,\delta)$ clusterable for $m = \lceil e^d \rceil$ and $\delta = e/3$.
\end{itemize}
\end{lemma}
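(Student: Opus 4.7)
The plan is to observe that all $n$ input points sit inside the Euclidean unit ball $B_2(1)$, so any finite cover of $B_2(1)$ by small balls induces a partition of the point set into clusters of controlled diameter. The natural tool is Lemma~\ref{lem:covering_number}, which bounds the covering number of the unit ball by $(3/r)^d$ at radius $r$. My proof will follow the classical ``cover then assign'' template, with the clusters defined as the non-empty preimages of the balls under the point-to-nearest-center map.

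First I would invoke Lemma~\ref{lem:covering_number} with a radius $r > 0$ (to be pinned down) to obtain centers $y_1,\ldots,y_N \in \R^d$ with $N \leq (3/r)^d$ such that $B_2(1) \subseteq \bigcup_{j=1}^N B(y_j,r)$. Next, since $\|x_i\|_2 \leq 1$ places every $x_i$ inside $B_2(1)$, each point lies in at least one ball $B(y_{j},r)$; fix any such assignment $j(i)$ for every $i \in [n]$ and define the cluster $C_j := \{x_i : j(i) = j\}$. Discarding empty sets yields a partition of $\{x_1,\ldots,x_n\}$ into at most $N$ non-empty clusters, so the cluster count $m$ in Definition~\ref{def:clusterability} is bounded by $N$. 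Then, for any $x,y \in C_j$ the triangle inequality gives
\begin{align*}
\|x-y\|_2 \;\leq\; \|x-y_j\|_2 + \|y-y_j\|_2 \;\leq\; 2r,
\end{align*}
which is exactly the intra-cluster diameter bound that Definition~\ref{def:clusterability} requires. Finally I would pick $r$ so that both $(3/r)^d \leq \lceil e^d \rceil$ and $2r$ is at most the target diameter, closing the proof.

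The main obstacle I anticipate is not conceptual but arithmetic: choosing a single radius $r$ that simultaneously forces $(3/r)^d \leq \lceil e^d \rceil$ (pushing $r$ large) and $2r \leq e/3$ (pushing $r$ small). Since the covering-number lemma used here has constant $3$, this tension essentially reduces to a numerical verification that, in the regime $d \geq 2$ considered by the statement, a valid $r$ exists and yields the declared constants $m = \lceil e^d \rceil$ and $\delta = e/3$. Once that choice is fixed, correctness of the partition and of the pairwise distance bound are immediate from the triangle inequality and the covering property, and the lemma is proved.
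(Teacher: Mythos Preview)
Your plan is exactly the paper's: invoke the covering-number bound on the unit ball, assign each point to a covering ball, and read off the cluster count and diameter. The paper's entire proof is the two-line version of what you wrote.

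The arithmetic obstacle you flag is real, and the paper does not actually resolve it. The paper writes ``cover the unit ball with at most $(3/\delta)^d = e^d$ balls of radius $\delta$'' and then declares the set $(\lceil e^d\rceil, e/3)$-clusterable. But with $\delta = e/3$ one has $3/\delta = 9/e \neq e$, so $(3/\delta)^d \neq e^d$; and even if one reads the intended radius as $3/e$ (so that $(3/(3/e))^d = e^d$), the pairwise bound inside a ball of radius $3/e$ is $6/e$, not $e/3$, by the same triangle inequality you use. In your formulation, the two constraints $(3/r)^d \le e^d$ (forcing $r \ge 3/e$) and $2r \le e/3$ (forcing $r \le e/6$) are simply incompatible, so no single $r$ closes the argument with the stated constants. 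In short, the paper silently identifies the covering \emph{radius} with the clusterability \emph{diameter}, dropping the factor of~$2$.

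This does not affect anything downstream: Theorems~\ref{thm:subgen_optimal_four} and~\ref{thm:subgen_optimal_two} only need $m = e^{O(d)}$ and $\delta = O(1)$, and both hold for any honest choice of $r$ (e.g., $r = 1/2$ gives $m \le 6^d$ and diameter $\le 1$). So your approach is correct and identical to the paper's; the specific constants $m = \lceil e^d\rceil$, $\delta = e/3$ in the lemma statement are the artifact, not your argument.
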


\begin{proof}
Lemma~\ref{lem:covering_number} states that we can cover the unit-ball with at most $(\frac{3}{\delta})^d = e^d$ balls of radius $\delta$. We then partition the unit ball into $m$ clusters by assigning each point to a ball that contains it.
\end{proof}



\ifdefined\isarxiv
\bibliographystyle{alpha}
\bibliography{ref}

\else

{
    \small
    \bibliographystyle{ieeenat_fullname}
    \bibliography{ref}
}

\fi



\end{document}